\documentclass{article}

\usepackage{microtype}
\usepackage{graphicx}
\usepackage{subcaption}
\usepackage{booktabs} 

\PassOptionsToPackage{hyphens}{url}
\usepackage{xurl}
\usepackage{hyperref}

\usepackage[accepted]{sty/icml2026}

\usepackage{amsmath}
\usepackage{amssymb}
\usepackage{mathtools}
\usepackage{amsthm}
\allowdisplaybreaks

\usepackage{siunitx}
\DeclareMathOperator{\HOp}{H}
\DeclareMathOperator{\SOp}{S}
\newcommand{\Hstep}[1]{\HOp\!\left(#1\right)}
\newcommand{\Sign}[1]{\SOp\!\left(#1\right)}
\DeclareMathOperator{\SalphaOp}{S_\alpha}
\DeclareMathOperator{\LinearOp}{Linear}

\DeclareMathOperator{\GLUOp}{GLU}
\DeclareMathOperator{\MLPOp}{MLP}
\DeclareMathOperator{\DropoutOp}{Dropout}
\DeclareMathOperator{\NormOp}{Norm}
\DeclareMathOperator{\SubLayerOp}{SubLayer}
\DeclareMathOperator{\CellOp}{Cell}
\DeclareMathOperator{\CellBOp}{\widetilde{\mathrm{Cell}}}
\newcommand{\Signalpha}[1]{\SalphaOp\!\left(#1\right)}
\newcommand{\Linear}[1]{\LinearOp\!\left(#1\right)}

\newcommand{\GLU}[1]{\GLUOp\!\left(#1\right)}
\newcommand{\MLP}[1]{\MLPOp\!\left(#1\right)}
\newcommand{\Dropout}[1]{\DropoutOp\!\left(#1\right)}
\newcommand{\Norm}[1]{\NormOp\!\left(#1\right)}
\newcommand{\SubLayer}[1]{\SubLayerOp\!\left(#1\right)}
\newcommand{\Cell}[1]{\CellOp\!\left(#1\right)}
\newcommand{\CellB}[1]{\CellBOp\!\left(#1\right)}

\usepackage{multirow}
\usepackage{makecell}

\usepackage[capitalize,noabbrev]{cleveref}

\theoremstyle{plain}
\newtheorem{theorem}{Theorem}[section]
\newtheorem{proposition}[theorem]{Proposition}

\theoremstyle{definition}

\theoremstyle{remark}
\newtheorem{remark}[theorem]{Remark}

\usepackage{xcolor}
\usepackage{colortbl}

\definecolor{best}{RGB}{0,100,0}        
\definecolor{nearbest}{RGB}{34,139,34}  
\definecolor{good}{RGB}{107,142,35}     

\usepackage{placeins,afterpage}

\icmltitlerunning{Improving the Performance and Learning Stability of Parallelizable RNNs Designed for Ultra-Low Power Applications}

\begin{document}

\twocolumn[
    \icmltitle{Improving the Performance and Learning Stability of Parallelizable RNNs Designed for Ultra-Low Power Applications}



  \icmlsetsymbol{equal}{*}

  \begin{icmlauthorlist}
    \icmlauthor{Julien Brandoit}{uliege}
    \icmlauthor{Arthur Fyon}{uliege}
    \icmlauthor{Damien Ernst}{uliege}
    \icmlauthor{Guillaume Drion}{uliege}
  \end{icmlauthorlist}

  \icmlaffiliation{uliege}{Department of Electrical Engineering and Computer Science, University of Liège, Liège, Belgium}

  \icmlcorrespondingauthor{Julien Brandoit}{jbrandoit@uliege.be}
  \icmlcorrespondingauthor{Guillaume Drion}{gdrion@uliege.be}

  \icmlkeywords{Machine Learning, ICML, Recurrent Neural Networks, Bistable Memory Recurrent Unit, Persistent Memory, Parallelizable RNNs, Long-Range Dependencies, Sequence Modeling, Analog Neural Networks}

  \vskip 0.3in
]



\printAffiliationsAndNotice{}  

\begin{abstract}
  Sequence learning is dominated by Transformers and parallelizable recurrent neural networks (RNNs) such as state-space models, yet learning long-term dependencies remains challenging, and state-of-the-art designs trade power consumption for performance. The Bistable Memory Recurrent Unit (BMRU) was introduced to enable hardware-software co-design of ultra-low power RNNs: quantized states with hysteresis provide persistent memory while mapping directly to analog primitives. However, BMRU performance lags behind parallelizable RNNs on complex sequential tasks. In this paper, we identify gradient blocking during state updates as a key limitation and propose a cumulative update formulation that restores gradient flow while preserving persistent memory, creating skip-connections through time. This leads to the Cumulative Memory Recurrent Unit (CMRU) and its relaxed variant, the $\alpha$CMRU. Experiments show that the cumulative formulation dramatically improves convergence stability and reduces initialization sensitivity. The CMRU and $\alpha$CMRU match or outperform Linear Recurrent Units (LRUs) and minimal Gated Recurrent Units (minGRUs) across diverse benchmarks at small model sizes, with particular advantages on tasks requiring discrete long-range retention, while the CMRU retains quantized states, persistent memory, and noise-resilient dynamics essential for analog implementation.
\end{abstract}

\section{Introduction}

The success of deep learning on sequential data has been driven by increasingly large models with increasingly large computational costs. Transformers achieve state-of-the-art performance through attention mechanisms, but their quadratic complexity in sequence length makes them impractical for resource-constrained deployment. Recurrent neural networks (RNNs) offer a more efficient alternative, encoding temporal dependencies in a fixed-size hidden state. However, classical RNNs rely on nonlinear recurrence, creating sequential dependencies that prevent parallel training.

A growing body of work has propelled parallelizable RNNs by adopting linear state dynamics as a deliberate architectural constraint. This design choice precludes nonlinear recurrence but is the structural property these models leverage to parallelize computation via associative scan algorithms. State-space models (SSMs) and parallelizable gated RNNs exploit this linearity to train increasingly large architectures, matching or exceeding Transformer performance on many benchmarks.

Beyond this notable boost in performance, modern parallelizable RNNs face two fundamental challenges. The first concerns hardware efficiency. Current RNN implementations, whether linear or nonlinear, are emulated on digital architectures: CPUs, GPUs, microcontrollers, or FPGAs~\cite{fedorov2019sparse, chang2015recurrent}. Even optimized TinyML deployments consume milliwatts of power~\cite{banbury2021mlperf, warden2019tinyml}. Achieving true sub-microwatt operation, as required for \textit{always-on} sensors or biomedical implants with decade-long battery life, demands RNNs with computational primitives that map directly onto efficient analog circuits rather than being emulated in digital logic. The second concerns memory. The contractive dynamics adopted by parallelizable RNNs, with state eigenvalues kept inside the unit circle to ensure stability, force states to decay exponentially toward a unique equilibrium, a property known as \emph{fading memory}. While fading memory suffices for many tasks, it precludes \emph{persistent memory}, which is the ability to retain information indefinitely until explicitly overwritten. Persistent memory can only be emulated by such architectures through increased network size, which is to be avoided in low-power applications. \emph{True, efficient persistent memory requires multistability, which contractive dynamics cannot exhibit.}

Bistable Memory Recurrent Units (BMRUs) have been specifically designed to address both challenges. BMRUs enable persistent memory through bistability between two \emph{quantized} hidden states, while remaining parallelizable via associative scans. This bistable, quantized structure maps directly to Schmitt triggers, a well-known analog primitive. It enables ultra-low power implementation in monolithic CMOS technology: a recent design achieves $>$90\% accuracy at \SI{100}{nW} for keyword spotting, three orders of magnitude below digital alternatives~\cite{fyon2026fullytunableultralowpower, Fyon2026AnalogRecurrent}. Quantized states also block analog noise propagation, preventing error accumulation across timesteps. However, persistent memory comes at the cost of expressivity. BMRUs lack the fading memory that SSMs and gated RNNs exploit for complex sequential computations, limiting performance on tasks requiring more than long-term retention. This raises a natural question: \emph{can BMRUs maintain their persistent memory and hardware co-design properties while approaching the performance of SSMs and gated parallelizable RNNs on more demanding sequential tasks?}

In this paper, we tackle this question by first identifying a key limitation of BMRUs: the full state replacement in update mode blocks gradient flow during training. We introduce a cumulative update formulation that restores gradient flow, dramatically improving convergence and reducing initialization sensitivity while preserving persistent memory. Building on this formulation, we propose the Cumulative Memory Recurrent Unit (CMRU) and the $\alpha$CMRU. The CMRU retains quantized states and maps directly to analog primitives, now corresponding to a hysteretic charge accumulator, making it suitable for ultra-low power analog implementations. The $\alpha$CMRU relaxes quantization constraints, providing increased expressivity for large-scale digital machine learning. We evaluate our approach with a focus on small model sizes and interpretability, motivated by the target application of ultra-low power analog electronics. Our experiments demonstrate that the CMRU and $\alpha$CMRU bridge the performance gap with SSMs and parallelizable gated RNNs while addressing distinct deployment scenarios: analog hardware for the CMRU and classical digital systems for the $\alpha$CMRU.

Our contributions can be summarized as follows:
\begin{enumerate}
    \item We identify gradient blocking during BMRU state updates as the key training bottleneck and show that the cumulative update formulation resolves it, making parallelizable persistent memory a reliable and competitive machine
    learning primitive for the first time.

    \item We introduce the Cumulative Memory Recurrent Unit (CMRU) and its relaxed
    variant, the $\alpha$CMRU, as novel parallelizable RNN cells providing persistent
    memory. The CMRU is structurally compatible with associative-scan pipelines
    and can serve as a drop-in persistent-memory block in Mamba-style architectures. The ($\alpha$)CMRU admits a selective state-space representation in which the
    hyperparameter $\varepsilon$ directly controls the eigenvalues of the state
    transition matrix during updates, unifying the cumulative and bistable regimes
    under a single parameterization and connecting persistent memory to the broader
    theory of linear recurrent models.

    \item Through systematic benchmarking, we establish a complementary performance
    pattern: CMRU variants outperform fading-memory cells on tasks requiring
    discrete state preservation, while fading-memory cells retain an advantage on
    tasks requiring smooth temporal integration. This specialization motivates hybrid
    architectures combining both memory types.

    \item We demonstrate practical ultra-low power deployment: a single-layer CMRU
    with hidden state dimension $d \leq 16$ achieves $>$95\% accuracy on keyword spotting benchmarks.
    The $\varepsilon$-annealing training strategy produces models that deploy
    directly on existing validated ultra-low power hardware~\cite{Fyon2026AnalogRecurrent}
    at sub-microwatt power levels. We additionally provide a proof-of-concept
    native CMRU analog circuit, validated in subthreshold CMOS simulation.
\end{enumerate}

\section{Background}

RNNs encode time-dependencies in a recurrent hidden state $h_t$ whose update depends on the previous hidden state $h_{t-1}$ and current observation of a time-series $x_t$. It writes
\begin{equation}
    h_t = f_\theta(x_t, h_{t-1}; \theta),\,\forall t \geq 1\,,
\end{equation}
where $f_\theta$ represents the state transition equation of the RNN and $\theta$ the learnable parameters of the network. This recurrent connection enables information persistence (or \emph{memory}) across time through encoding in its hidden state. Nonlinear versions such as LSTM~\cite{hochreiter1997longshorttermmemory} and GRUs~\cite{cho2014learningphraserepresentationsusing} have dominated sequence modeling for decades. However, their sequential dependencies, arising from nonlinear gating mechanisms, prevent parallel training and energy-efficient analog implementation \cite{martin2018parallelizinglinearrecurrentneural, heydari2025tiny}.

\subsection{Related Work on Parallelizable RNNs}
Recent efforts have led to the emergence of novel RNN architectures that are parallelizable over the sequence length with competitive performance on sequential benchmarks. 

State-space models (SSMs) and linear RNNs exploit linear state dynamics (the $f_\theta$ is affine in $h_{t-1}$) to achieve parallelizable training, either through convolutional representations \cite{gu2021combiningrecurrentconvolutionalcontinuoustime, gu2022efficientlymodelinglongsequences} or the parallel scan algorithm \cite{blelloch1989scans, martin2018parallelizinglinearrecurrentneural, smith2022simplifiedstatespacemodels, gu2024mambalineartimesequencemodeling}. The Linear Recurrent Unit (LRU) \cite{orvieto2023resurrectingrecurrentneuralnetworks} distills this family to its essential form: a diagonal linear recurrence with complex-valued states and exponential parameterization for stability. We adopt LRU as our representative of this family of architectures in our experiments for its simplicity and competitive performance. This cell is presented in detail in \cref{subsection:lru}.

Gated RNNs can also achieve parallelizable training by removing the state-dependency in their gating mechanisms, i.e. computing gate values from the input alone, $z_t = \sigma(W_z x_t)$. It yields an update of the form $h_t = (1 - z_t) \odot h_{t-1} + z_t \odot \tilde{h}_t,$
where $\tilde{h}_t$ depends only on $x_t$. As in SSMs, this is a linear recurrence in $h_t$ (the $f_\theta$ is affine in $h_{t-1}$) and can thus be parallelized via the scan algorithm. Several such architectures have been proposed, including xLSTM \cite{beck2024xlstmextendedlongshortterm}, HGRN \cite{qin2023hierarchicallygatedrecurrentneural}, MatMul-free LM \cite{zhu2025scalablematmulfreelanguagemodeling} and GILR \cite{martin2018parallelizinglinearrecurrentneural}, though some only achieve partial parallelization. We adopt minGRU \cite{feng2024weregrudminimalgatedrecurrent} as our representative of this family of architectures in our experiments: it strips the GRU to its minimal form (a single input-dependent gate and no reset gate) making it one of the simplest fully parallelizable gated architectures. This cell is presented in detail in \cref{subsection:mingru}.

In all these models, their structure implies monostability, representative of \textit{fading memory} \cite{boyd1985fadingmemory}, as they force the eigenvalues of the discrete update laws to strictly lie within the unit circle. A broader discussion of related work, including hardware-oriented and hybrid architectures, is provided in \cref{app:related_work}. 

\subsection{The Bistable Memory Recurrent Unit}

The Bistable Memory Recurrent Unit (BMRU) has been recently introduced as an alternative RNN cell that achieves persistent memory while preserving parallelizability~\cite{degeeter2026parallelizablememoryrecurrentunits}. The BMRU hidden state is \emph{quantized}: each component of the hidden state vector $h_t$ can only take one of two values, $+\alpha$ or $-\alpha$. This binary quantization is the signature of bistability. The BMRU dynamics are given by:
\begin{align}
    \hat{h}_t &= W_x x_t + b_x\,, \label{eq:bmru_candidate}\\
    \beta_t &= \left|W_\beta x_t + b_\beta\right|\,, \label{eq:bmru_threshold}\\
    z_t &= \Hstep{|\hat{h}_t| - \beta_t}\,, \label{eq:bmru_gate}\\
    h_t &= z_t \odot \Sign{\hat{h}_t} \odot \alpha + (1 - z_t) \odot h_{t-1}\,, \label{eq:bmru_update}
\end{align}
where $\Hstep{\cdot}$ denotes the Heaviside step function, $\Sign{\cdot}$ is the sign function, $W_x$ and $W_\beta$ are learnable weight matrices, and $b_x$, $b_\beta$, and $\alpha$ are learnable parameter vectors. This formulation implements adaptive thresholding: when the candidate magnitude $|\hat{h}_t|$ exceeds the input-dependent threshold $\beta_t$ 
, the hidden state switches to $\alpha\Sign{\hat{h}_t}$; otherwise, it remains unchanged. The quantized, bistable nature of the state enables persistent memory: information is retained indefinitely until an input actively triggers a switch.

The interest of BMRUs lies in the direct mapping onto the Schmitt trigger, an analog primitive. This enables ultra-low power analog implementations in monolithic CMOS technology~\cite{Fyon2026AnalogRecurrent,fyon2026fullytunableultralowpower}. In contrast, fading-memory architectures such as LRUs are typically emulated in digital hardware for low-power applications (microcontrollers, FPGAs) \cite{banbury2021mlperf, warden2019tinyml, fedorov2019sparse, chang2015recurrent}; a true analog implementation would require capacitors to realize the exponential decay, precluding monolithic integration. 

\section{Extending the BMRU: Continuous States and Cumulative Updates}
The original BMRU achieves persistent memory through quantized bistable states, but its design introduces two potential limitations: (i) the restriction to binary values $\pm\alpha$ may limit representational capacity, and (ii) the state reset during updates blocks gradient flow, hindering learning. In this section, we introduce the Cumulative Memory Recurrent Unit (CMRU) and its variant the $\alpha$CMRU. The CMRU addresses the gradient flow limitation through a cumulative update formulation while retaining quantized states for analog implementation. The $\alpha$CMRU additionally relaxes quantization constraints for large-scale digital machine learning. 

\subsection{Restoring Gradient Flow: Cumulative Updates}\label{subsection:vanishing}
The BMRU suffers from a gradient blocking problem during state updates. The cell operates in two modes depending on the gate $z_t$. In \emph{retain mode} ($z_t = 0$, when $|\hat{h}_t| < \beta_t$), the hidden state is copied forward unchanged, enabling perfect gradient propagation (see \cref{eq:bmru_update}).
However, in \emph{update mode} ($z_t = 1$, when $|\hat{h}_t| \geq \beta_t$), the previous state is erased and overwritten, completely blocking gradient flow:
\begin{equation}
    \left.\frac{\partial h_t}{\partial h_{t-1}}\right|_{z_t=1} = 0\,.
\end{equation}
This prevents error signals from reaching earlier timesteps through update operations, making it difficult for the model to learn effective update strategies via backpropagation through time. The problem is most severe during early training, before the gating mechanism has learned when to retain versus update, resulting in poor convergence and high initialization sensitivity.

\textbf{Cumulative update formulation.}
We address gradient blocking by enabling partial state preservation during updates. Introducing a hyperparameter $\varepsilon \in [0,1]$ that controls the fraction of previous state retained, we define the Cumulative Memory Recurrent Unit (CMRU) through the modified update equation:
\begin{equation}\label{eq:cmru}
    h_t =  z_t \odot \left[\Sign{\hat{h}_t} \odot \alpha + \varepsilon h_{t-1}\right] + (1-z_t) \odot h_{t-1}\,.
\end{equation}

The term $\varepsilon h_{t-1}$ is active only during updates ($z_t = 1$), creating a temporal skip connection that restores gradient flow (from \cref{eq:cmru}):
\begin{equation}
    \left.\frac{\partial h_t}{\partial h_{t-1}}\right|_{z_t=1} = \varepsilon\,.
\end{equation}

The hyperparameter $\varepsilon$ interpolates between three regimes.

When $\varepsilon = 0$, the formulation reduces to the original BMRU with full state replacement.

When $0 < \varepsilon < 1$, the model exhibits \emph{non-instantaneous fading memory}. Unlike LRU or minGRU, where the state decays at every timestep regardless of input, here fading occurs only during update events. Consider a sequence of $k$ successive updates at times $t_1, t_2, \ldots, t_k$: the contribution of the state at $t_1$ to $h_{t_k}$ is scaled by $\varepsilon^{k-1}$. Memory thus fades exponentially with the \emph{number of updates}, not with elapsed time. Between updates, the state is preserved perfectly. This creates an event-driven forgetting mechanism: salient inputs (those triggering updates) gradually overwrite older information, while periods of low activity preserve memory indefinitely.

When $\varepsilon = 1$, the cell becomes a \emph{selective integrator}:
\begin{equation}\label{eq:integrator}
    \left. h_t \right|_{\varepsilon=1} = h_{t-1} + z_t \odot \Sign{\hat{h}_t} \odot \alpha\,,
\end{equation}
accumulating quantized inputs into the hidden state with perfect gradient flow ($\tfrac{\partial h_t}{\partial h_{t-1}} = 1$).

Critically, the persistent memory property in retain mode ($z_t = 0$) is unchanged for all values of $\varepsilon$. 

\subsection{Relaxing Quantization: the $\boldsymbol{\alpha}$CMRU Variant}

To investigate whether the CMRU fixed quantization scale limits representational capacity, we introduce the $\alpha$CMRU, which replaces the fixed scale $\alpha$ with an input-dependent function:
\begin{equation}\label{eq:alpha_cmru}
    \alpha_t = W_\alpha x_t + b_\alpha\,,
\end{equation}
where $W_\alpha$ and $b_\alpha$ are learnable parameters. The state update equation becomes:
\begin{equation}
    h_t = z_t \odot \left[\Sign{\hat{h}_t} \odot \alpha_t + \varepsilon h_{t-1}\right] + (1 - z_t) \odot h_{t-1}\,.
\end{equation}
Unlike the CMRU, whose hidden states are quantized, the $\alpha$CMRU can represent arbitrary values in $\mathbb{R}^d$.

The two variants address different deployment scenarios with complementary trade-offs. For ultra-low power analog implementations, the CMRU quantized states offer inherent noise resilience. 
For classical machine learning on digital hardware, the $\alpha$CMRU continuous representation removes any potential precision constraints, providing increased expressivity for large-scale applications.

\subsection{Fixed Point Structure and Representational Capacity}

The cumulative formulation fundamentally changes which states the cell can represent. 

In the original BMRU ($\varepsilon = 0$), each update overwrites the state with $\pm\alpha$. The cell can therefore only occupy $2^d$ distinct states, where $d$ is the hidden dimension (one binary choice per dimension). This is the quantized bistable structure.

With $\varepsilon = 1$, updates no longer overwrite but \emph{accumulate}: each update adds $\pm\alpha$ to the current state. The reachable states thus form a regular lattice at integer multiples of $\alpha$, as illustrated in \cref{fig:fixed_points}. This expands representational capacity from $2^d$ discrete states to countably infinite attractors, while maintaining a fixed precision: successive states differ by at least $\|\alpha\|$.

For the $\alpha$CMRU with $\varepsilon = 1$, each update accumulates a different input-dependent value $\alpha_t$. Since the $\alpha_t$ vary continuously with the input, the set of reachable states becomes dense in $\mathbb{R}^d$: the cell can in principle approximate any target state with arbitrary precision.

\begin{figure}[thbp!]
\centering
    \vskip 0in
    \centerline{\includegraphics[width=\columnwidth]{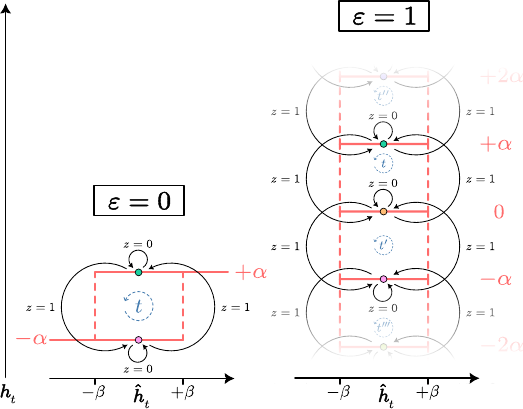}}
    \caption{
  Fixed point structure of hidden state $h_t$ for CMRU as a function of $\varepsilon$.
  \textbf{Left} ($\varepsilon=0$): Original BMRU with two discrete stable fixed points at $h^* = \pm\alpha$.
  \textbf{Right} ($\varepsilon=1$): CMRU cumulative formulation expands reachable fixed points to all integer multiples $h^* \in \{\pm k\alpha : k \in \mathbb{N}\}$, creating countably infinite stable attractors. Dashed vertical lines at $\pm\beta$ indicate candidate state thresholds for gating.
}
    \label{fig:fixed_points}
\end{figure}

\subsection{Experimental Framework}\label{section:protocol}
Comparing RNN architectures solely on peak benchmark accuracy can be misleading: an architecture that achieves state-of-the-art results under careful tuning may prove brittle in practice if its performance is sensitive to weight initialization or requires extensive hyperparameter search for each new task \cite{henderson2019deepreinforcementlearningmatters, bouthillier2021accountingvariancemachinelearning}. To provide a more complete picture, we characterize each architecture through three complementary properties:

\textbf{(i) Peak performance}: the best accuracy achievable by the architecture, reflecting its expressive capacity;

\textbf{(ii) Initialization sensitivity}: the variability in performance across different random seeds, indicating training reliability;

\textbf{(iii) Hyperparameter robustness}: the ability to perform well across diverse tasks without task-specific tuning, facilitating practical and accessible use across different machine learning domains.

We detail our experimental strategy below. 

\textbf{Common backbone architecture.}
To isolate the effect of the recurrent cell, all cells (LRU, minGRU, CMRU, and $\alpha$CMRU) are integrated into an identical backbone architecture. The backbone interleaves recurrent layers with point-wise multi-layer perceptrons (MLPs), skip connections, and normalization layers, as illustrated in \cref{fig:backbone}. Full architecture and training details are provided in \cref{app:training}.

\textbf{No hyperparameter tuning.}
Critically, we perform no hyperparameter tuning across tasks. A single set of hyperparameters is used for all experiments. The validation set serves exclusively for early stopping and checkpoint selection; it is never used to inform hyperparameter decisions. This protocol ensures that our results reflect the inherent robustness of each architecture rather than the outcome of task-specific optimization, and allows us to assess property~(iii) implicitly through consistent performance across diverse tasks.

\textbf{Evaluation protocol.}
Properties (i) and (ii) are assessed by training each configuration with five independent random seeds and evaluating the best checkpoint on the test set. We report accuracy for classification benchmarks and mean absolute error (MAE) for regression benchmarks. Results are summarized as mean $\pm$ min--max intervals, capturing both peak performance and initialization sensitivity. Numerical values are tabulated in \cref{appendix:tabulated_result}, and raw training logs are publicly available alongside the code.\footnote{\url{https://github.com/julienbrandoit/ICML2026---Improving-Performance-and-Stability-of-Ultra-Low-Power-RNNs.git}}

\section{Experimental Investigation of the ($\alpha$)CMRU}

\subsection{Effect of $\boldsymbol{\varepsilon}$ on Gradient Flow and Convergence}
Throughout this section, we evaluate models using \textit{last pooling}. 
This choice tests both retention of information over the entire sequence and the ability to learn from long-range dependencies. 
Detailed descriptions of all benchmark tasks used in this work are provided in  \cref{app:tasks}.

We first investigate the impact of varying $\varepsilon$ on the sequential MNIST (sMNIST) classification task, where models classify handwritten digits presented pixel-by-pixel in raster scan order. 
Increasing $\varepsilon$ from 0 to 1 yields consistent improvements in mean accuracy and initialization stability across both CMRU and $\alpha$CMRU variants (\cref{figure:accuracy_smnist_combined}, left panel). For CMRU with $d=32$, accuracy improves from approximately 30\% at $\varepsilon=0$ (i.e., the original BMRU) to 96\% at $\varepsilon=1$, while cross-seed variability decreases dramatically. The $\alpha$CMRU exhibits similar trends, achieving 97\% accuracy at $\varepsilon=1$ with minimal variance. These improvements occur despite different quantization mechanisms (fixed versus input-dependent), confirming that gradient flow through cumulative updates (not representational capacity) is the critical factor enabling reliable convergence. Peak performance occurs at $\varepsilon=1$, where gradient magnitude equals unity in both retain and update modes. This observation remains consistent across different permutations of the sequential MNIST task (see \cref{figure:pmnist_challenging} in \cref{subsection:pmnist_results}).

\begin{figure*}[htbp!]
\centering
  \vskip 0in
    \includegraphics[width=\textwidth]{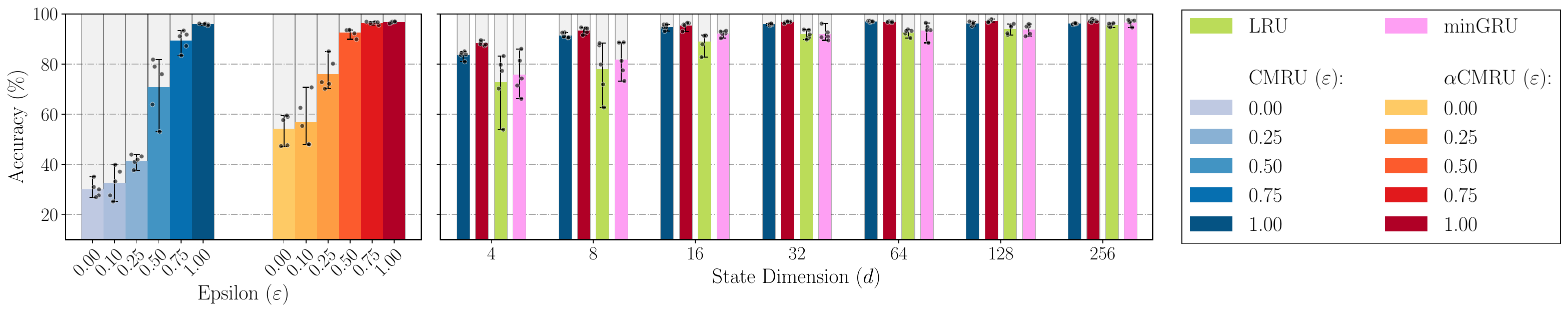}
    \caption{
  Performance on sequential MNIST (sMNIST) classification task.
  Results show mean accuracy across five random initializations with min--max error bars.
  \textbf{Left:} Effect of hyperparameter $\varepsilon \in \{0.00, 0.25, 0.50, 0.75, 1.00\}$ on CMRU and $\alpha$CMRU ($d=32$, single layer, last pooling).
  CMRU with $\varepsilon=0$ corresponds to BMRU.
  Increasing $\varepsilon$ consistently improves mean accuracy and reduces variability.
  \textbf{Right:} Scaling across state dimensions $d \in \{4, 8, 16, 32, 64, 128, 256\}$ for CMRU ($\varepsilon=1$), $\alpha$CMRU ($\varepsilon=1$), LRU, and minGRU (single layer).
  CMRU and $\alpha$CMRU show more robust scaling with higher accuracy and lower variability, especially at smaller dimensions.
}
    \label{figure:accuracy_smnist_combined}
\end{figure*}

\subsection{Scaling Behavior and Initialization Robustness}

Beyond addressing gradient flow, we investigate whether cumulative updates improve robustness across different model capacities. We compare performance on sMNIST across different state dimensions $d$ (\cref{figure:accuracy_smnist_combined}, right panel).

Both CMRU and $\alpha$CMRU with $\varepsilon=1$ achieve higher accuracy and lower cross-seed variability than LRU and minGRU across all tested dimensions. The advantage is most pronounced at small state sizes, which is critical for ultra-low power implementations: at $d=4$, CMRU achieves approximately 84\% accuracy with minimal variance, while LRU and minGRU exhibit substantially higher variability and lower mean performance. At $d=256$, all architectures converge to similar accuracy (96--97\%), but the CMRU variants maintain lower variance.

This reduced initialization sensitivity across all scales confirms that cumulative updates yield more stable optimization landscapes, an essential property for hardware-constrained deployments and large-scale machine learning where training costs are prohibitive. The consistent behavior across dimensions suggests that these improvements are fundamental to the architecture rather than artifacts of a particular capacity regime.

\subsection{Persistent Memory under Continuous Noise}\label{subsection:copyfirstnoisy}

We next evaluate persistent memory and state efficiency directly using a copy-first-input task with continuous noisy inputs. A scalar value $x_0 \sim \mathcal{U}(-1, 1)$ is presented at $t=0$ and must be reproduced at the final timestep after processing uniformly random noise $x_t \sim \mathcal{U}(-1, 1)$ for all $t > 0$. This design tests both long-term retention and the ability to reject irrelevant information. For the quantized CMRU, this task also reveals how efficiently the architecture exploits its discrete state space, by comparing its error against the theoretical quantization limit $\mathcal{E}^*_{\text{MAE}}$ (
see \cref{app:quantized}).

\begin{figure}[htbp!]
  \vskip 0in
\centering    \includegraphics[width=\columnwidth]{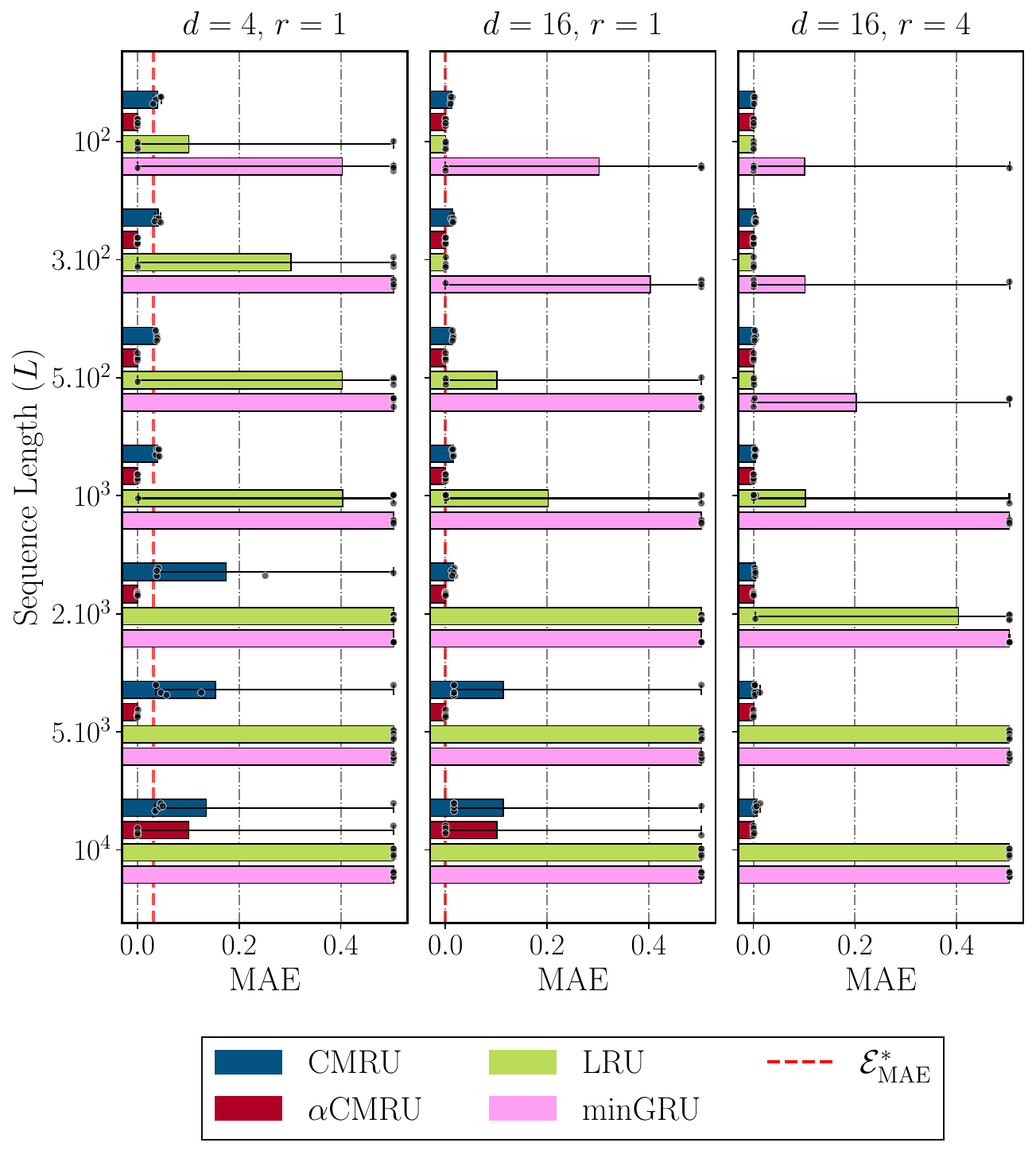}
\caption{
  MAE on copy-first-input (continuous, noisy) versus sequence length $L$.
  \textbf{Left:} $r=1$, $d=4$.
  \textbf{Middle:} $r=1$, $d=16$.
  \textbf{Right:} $r=4$, $d=16$.
  Dashed red lines: CMRU quantization limits $\mathcal{E}^*_{\text{MAE}}$.
  Results shown as mean with min--max error bars over five seeds.
}
    \label{figure:mae_copy_first_combined}
\end{figure}

CMRU with $\varepsilon=1$ maintains consistent performance across sequence lengths $L$ 
and capacity configurations (\cref{figure:mae_copy_first_combined}). In the most constrained setting ($d=4$, single layer and long sequence lengths), CMRU fails to solve the task robustly. This is due to limited noise rejection rather than gradient propagation issues, as these failures disappear in a noise-free version of the task (\cref{figure:copy_first_combined}, right). CMRU achieves MAE near the theoretical limit $\mathcal{E}^*_{\text{MAE}}$ at all sequence lengths, confirming that cumulative updates enable near-optimal memory encoding despite the discrete representation. We further validated this property on a 15-classes first-input classification task with $d=4$: CMRU with $\varepsilon=1$ achieves 100\% accuracy regardless of sequence length (\cref{figure:copy_first_combined}, left), showing optimal use of the hidden state.

The $\alpha$CMRU with $\varepsilon=1$ removes the quantization constraint entirely, achieving near-perfect performance across all configurations. This confirms that input-dependent scaling offers a path to arbitrary precision when fixed quantization is insufficient.

In contrast, LRU and minGRU degrade substantially as sequence length increases, with high cross-seed variability, and in the presence or absence of noise. This is not merely an optimization issue: their dynamics fundamentally preclude perfect retention. LRU applies exponential decay through eigenvalues with magnitude less than one, causing stored information to fade continuously. minGRU lacks a true retention mode: even when the gate is near zero, continuous input processing prevents infinite memory preservation. At $L=10000$, both architectures fail entirely. Even scaling to $r=4$ layers with $d=16$ does not consistently match a single-layer CMRU, confirming that the limitation is architectural, not capacity-related.

We further validate this advantage at larger scale. With $r=6$ layers and $d=256$, CMRU and $\alpha$CMRU solve the task (continuous, noisy, $L=5000$) robustly across all five seeds, while LRU and minGRU fail entirely. This confirms that the persistent memory advantage is not a small-model artifact (see \cref{app:copyfirst_large_scale}).


\textbf{Summary and Choice of $\boldsymbol{\varepsilon}$ for Subsequent Experiments}

Across all tasks and model capacities, increasing $\varepsilon$ consistently improves convergence stability, reduces initialization sensitivity, and enhances final performance. The best results occur at $\varepsilon=1$, where gradient magnitude equals unity in both retain and update modes. Based on these findings, \textit{we fix $\mathit{\varepsilon=1}$ for all subsequent experiments.}

\section{Low-Capacity Scaling on Sequential Benchmarks}

The preceding experiments establish that cumulative updates enable robust persistent memory. We now assess whether CMRU variants maintain competitive performance on tasks demanding more than long-term retention, and whether this competitiveness extends to practically relevant applications at minimal capacity.

\subsection{Classical Sequential Benchmarks}

We evaluate four benchmarks from the Long Range Arena suite~\cite{tay2020longrangearenabenchmark}: IMDb sentiment classification, sequential CIFAR-10, ListOps, and Pathfinder. These benchmarks span a range of computational requirements, from smooth temporal integration to compositional discrete reasoning and long-range spatial path-tracing. We employ mean pooling, following standard practice. We evaluate architectures with fixed state dimension $d=32$ and varying depth $r \in \{1, 3, 6\}$ recurrent layers, except for Pathfinder where we use $r=3$ only. Results are presented in \cref{tab:part3_tasks_combined}.

\begin{table}[t!]
\caption{
  Accuracy (\%) on classical sequential benchmarks (IMDb, sCIFAR10, ListOps, Pathfinder).
  Results for different $(r, d)$ shown as mean with min--max range.
  Pathfinder uses $(r,d)=(3,32)$ only; chance level is 50\%.
  \textcolor{best}{\textbf{Dark green}}: best performance; \textcolor{nearbest}{medium green}: within 1\%; \textcolor{good}{olive green}: within 2\%.
}
\label{tab:part3_tasks_combined}
\centering
    \begin{small}
      \begin{sc}
        \setlength{\tabcolsep}{3pt}
        \begin{tabular}{lcccc}
          \toprule
          $(r, d)$ & $\alpha$CMRU & CMRU & LRU & minGRU \\
          \midrule
          \multicolumn{5}{c}{\textbf{IMDb}} \\
          \midrule
          (1, 32) & \textcolor{good}{\shortstack{64.90 \\ {\scriptsize [61.00; 66.31]}}} & \textcolor{good}{\shortstack{64.84 \\ {\scriptsize [64.19; 65.44]}}} & \textbf{\textcolor{best}{\shortstack{66.54 \\ {\scriptsize [64.16; 70.31]}}}} & \textcolor{nearbest}{\shortstack{65.88 \\ {\scriptsize [64.31; 67.28]}}} \\
          (3, 32) & \textcolor{good}{\shortstack{64.62 \\ {\scriptsize [63.41; 65.56]}}} & \textcolor{nearbest}{\shortstack{65.18 \\ {\scriptsize [64.34; 66.19]}}} & \textbf{\textcolor{best}{\shortstack{66.84 \\ {\scriptsize [64.16; 70.25]}}}} & \textcolor{nearbest}{\shortstack{66.53 \\ {\scriptsize [64.38; 70.12]}}} \\
          (6, 32) & \textcolor{good}{\shortstack{64.13 \\ {\scriptsize [60.19; 66.31]}}} & \textcolor{nearbest}{\shortstack{65.04 \\ {\scriptsize [63.28; 66.75]}}} & \textbf{\textcolor{best}{\shortstack{67.03 \\ {\scriptsize [61.47; 69.16]}}}} & \textcolor{nearbest}{\shortstack{66.21 \\ {\scriptsize [63.81; 67.69]}}} \\
          \midrule
          \multicolumn{5}{c}{\textbf{sCIFAR10}} \\
          \midrule
          (1, 32) & \shortstack{52.12 \\ {\scriptsize [51.34; 53.44]}} & \textcolor{good}{\shortstack{56.27 \\ {\scriptsize [55.53; 57.31]}}} & \textbf{\textcolor{best}{\shortstack{64.49 \\ {\scriptsize [63.06; 66.72]}}}} & \textcolor{nearbest}{\shortstack{58.64 \\ {\scriptsize [57.53; 59.31]}}} \\
          (3, 32) & \textcolor{good}{\shortstack{58.64 \\ {\scriptsize [57.69; 60.09]}}} & \textcolor{nearbest}{\shortstack{59.82 \\ {\scriptsize [55.25; 61.66]}}} & \textcolor{nearbest}{\shortstack{59.99 \\ {\scriptsize [58.94; 61.44]}}} & \textbf{\textcolor{best}{\shortstack{60.95 \\ {\scriptsize [60.44; 61.88]}}}} \\
          (6, 32) & \textcolor{good}{\shortstack{59.71 \\ {\scriptsize [57.69; 61.38]}}} & \shortstack{58.81 \\ {\scriptsize [57.91; 59.59]}} & \textbf{\textcolor{best}{\shortstack{62.34 \\ {\scriptsize [58.75; 65.19]}}}} & \textcolor{nearbest}{\shortstack{62.12 \\ {\scriptsize [61.06; 62.84]}}} \\
          \midrule
          \multicolumn{5}{c}{\textbf{ListOps}} \\
          \midrule
          (1, 32) & \textcolor{nearbest}{\shortstack{38.74 \\ {\scriptsize [36.91; 39.75]}}} & \textcolor{nearbest}{\shortstack{38.84 \\ {\scriptsize [36.84; 39.75]}}} & \textcolor{nearbest}{\shortstack{38.64 \\ {\scriptsize [37.03; 39.75]}}} & \textbf{\textcolor{best}{\shortstack{39.40 \\ {\scriptsize [37.69; 40.94]}}}} \\
          (3, 32) & \textbf{\textcolor{best}{\shortstack{60.86 \\ {\scriptsize [59.94; 62.09]}}}} & \textcolor{nearbest}{\shortstack{60.42 \\ {\scriptsize [59.44; 61.03]}}} & \shortstack{40.45 \\ {\scriptsize [37.75; 48.00]}} & \textcolor{good}{\shortstack{55.07 \\ {\scriptsize [39.06; 60.28]}}} \\
          (6, 32) & \textcolor{nearbest}{\shortstack{61.36 \\ {\scriptsize [59.28; 62.94]}}} & \textbf{\textcolor{best}{\shortstack{61.39 \\ {\scriptsize [60.38; 62.16]}}}} & \shortstack{39.37 \\ {\scriptsize [38.94; 39.91]}} & \textcolor{good}{\shortstack{59.69 \\ {\scriptsize [57.59; 61.03]}}} \\
          \midrule
          \multicolumn{5}{c}{\textbf{Pathfinder}} \\
          \midrule
          (3, 32) & \textbf{\textcolor{best}{\shortstack{92.00 \\ {\scriptsize [89.80; 93.70]}}}} & \textcolor{nearbest}{\shortstack{90.20 \\ {\scriptsize [87.30; 92.10]}}} & \shortstack{50.14 \\ {\scriptsize [49.00; 51.50]}} & \shortstack{49.30 \\ {\scriptsize [48.10; 51.20]}} \\
          \bottomrule
        \end{tabular}
      \end{sc}
    \end{small}
  \vskip -0.1in
\end{table}

\begin{table*}[htbp!]
\caption{
  Cross-entropy loss (lower is better) on Shakespeare character-level
  language modeling.
}
\label{tab:lm_shakespeare}
\centering
\begin{small}
  \begin{sc}
    \setlength{\tabcolsep}{4pt}
    \begin{tabular}{lcccc|cc}
      \toprule
      & $\alpha$CMRU & CMRU & LRU & minGRU & \shortstack{minGRU\\+CMRU} & \shortstack{minGRU\\+$\alpha$CMRU} \\
      \midrule
      Loss & \shortstack{1.443 \\ {\scriptsize [1.440; 1.447]}} & \shortstack{1.455 \\ {\scriptsize [1.450; 1.461]}} & \shortstack{1.504 \\ {\scriptsize [1.496; 1.511]}} & \shortstack{1.453 \\ {\scriptsize [1.449; 1.458]}} & \shortstack{1.442 \\ {\scriptsize [1.438; 1.446]}} & \textbf{\shortstack{1.441 \\ {\scriptsize [1.437; 1.445]}}} \\
      \bottomrule
    \end{tabular}
  \end{sc}
\end{small}
\end{table*}

\textbf{Cumulative updates narrow the performance gap between CMRU and linear RNNs.} On IMDb and sCIFAR10, LRU and minGRU achieve the highest accuracy, consistent with their fading memory mechanisms. However, the performance gap is small and narrows with increased depth. With $\varepsilon=1$ the gradient magnitude equals unity along both the sequence axis; the CMRU thus acts as a residual connection in time, making deeper stacks as easy to train as shallower ones~(\cref{sec:ssm_representation}). We note that sCIFAR10 exhibited substantial overfitting across all architectures, a phenomenon also reported in prior work~\cite{orvieto2023resurrectingrecurrentneuralnetworks}.

\textbf{Discrete updates show advantages on ListOps.} Both CMRU variants outperform the baselines at deeper architectures, achieving approximately 61\% accuracy (close to state-of-the-art) with $r \geq 3$ layers, while LRU performance remains limited across all configurations. This suggests that persistent memory and discrete state transitions provide useful primitives for tasks requiring exact preservation of intermediate results.

\textbf{Task-dependent architectural advantages emerge.} Fading-memory architectures (LRU, minGRU) excel when information must be smoothly integrated over time, while persistent-memory architectures (CMRU, $\alpha$CMRU) show strengths when information must be preserved through discrete operations. Neither family uniformly dominates. We discuss implications for architecture design in \cref{sec:discussion}.


\textbf{Persistent memory extends to harder long-range tasks.} On Pathfinder, both CMRU
variants achieve above 90\% accuracy while LRU and minGRU perform at chance level
($\approx$50\%), extending the pattern observed on ListOps. While larger fading-memory
networks can solve Pathfinder with hyperparameter tuning~\cite{orvieto2023resurrectingrecurrentneuralnetworks}, LRU and minGRU at this capacity ($r=3$, $d=32$)
fail entirely, suggesting that persistent memory provides a decisive efficiency advantage
for tasks requiring long-range state preservation.

\textbf{Persistent memory complements fading memory in language modeling.}
We evaluate on character-level language modeling on the Shakespeare dataset, reporting
cross-entropy loss on the test set (lower is better). Results are presented in
\cref{tab:lm_shakespeare}. CMRU and $\alpha$CMRU are competitive with minGRU and LRU,
demonstrating that persistent memory does not impair performance on tasks dominated by
smooth statistical structure. Hybrid architectures combining minGRU with CMRU or
$\alpha$CMRU layers achieve the best results, though the improvements are marginal at
this scale. The primary finding here is one of compatibility: persistent and fading memory
can be trained jointly without conflict. Evidence that such combinations can yield
substantial gains in learning efficiency and generalization comes from reinforcement
learning, where combining both memory types leads to significant improvements in
horizon generalization~\cite{Bakija2026MultistabilityRL}.
The CMRU and $\alpha$CMRU results are obtained using $\varepsilon$-annealing during training;
generalization to longer sequence lengths may require additional training strategies not explored in this work.

\subsection{Audio Classification at Minimal Capacity}

We now evaluate whether this competitiveness extends to \textit{practically relevant} audio classification tasks at the \textit{minimal capacity regime} required for ultra-low power deployment~\cite{Fyon2026AnalogRecurrent}. We assess two keyword spotting (KWS) tasks derived from the Google Speech Commands dataset~\cite{warden2018speechcommandsdatasetlimitedvocabulary}: digit recognition (11 classes) and full vocabulary classification (35 classes). These tasks represent canonical applications for \textit{always-on} audio processing in battery-constrained devices. We evaluate single-layer architectures ($r=1$) with state dimension $d\in \{4, 8, 16\}$, corresponding to easily implementable analog device sizes. Results are presented in \cref{tab:audio_tasks_r1}. We extend the analysis to two-layer architectures and wake-word detection (binary: ``yes'' versus other words) in \cref{appendix:additional_results}.

\textbf{All architectures achieve strong performance at minimal capacity.} On KWS Digits, all cells exceed 95\% accuracy even at $d=4$. On KWS All (35 classes), performance ranges from 88\% to 92\%, with consistent improvement as state dimension increases.

\begin{table}[htbp!]
\caption{
  Accuracy (\%) for audio classification (KWS Digits, KWS All) with $r=1$ layer and $d \in \{4, 8, 16\}$.
  Mean with min--max range over multiple initializations.
}
\label{tab:audio_tasks_r1}
\centering    \begin{small}
      \begin{sc}
        \setlength{\tabcolsep}{3pt}
        \begin{tabular}{lcccc}
          \toprule
          $(r, d)$ & $\alpha$CMRU & CMRU & LRU & minGRU \\
          \midrule
          \multicolumn{5}{c}{\textbf{KWS Digits}} \\
          \midrule
          (1, 4) & \shortstack{95.13 \\ {\scriptsize [94.75; 95.47]}} & \shortstack{95.12 \\ {\scriptsize [94.69; 95.50]}} & \shortstack{95.49 \\ {\scriptsize [94.62; 96.28]}} & \shortstack{95.92 \\ {\scriptsize [95.41; 96.31]}} \\
          (1, 8) & \shortstack{95.09 \\ {\scriptsize [94.50; 95.69]}} & \shortstack{95.40 \\ {\scriptsize [95.03; 95.78]}} & \shortstack{96.16 \\ {\scriptsize [95.47; 96.56]}} & \shortstack{96.05 \\ {\scriptsize [95.81; 96.53]}} \\
          (1, 16) & \shortstack{95.49 \\ {\scriptsize [95.22; 95.75]}} & \shortstack{95.38 \\ {\scriptsize [94.91; 95.91]}} & \shortstack{96.34 \\ {\scriptsize [95.88; 96.66]}} & \shortstack{95.99 \\ {\scriptsize [95.69; 96.25]}} \\
          \midrule
          \multicolumn{5}{c}{\textbf{KWS All}} \\
          \midrule
          (1, 4) & \shortstack{88.41 \\ {\scriptsize [86.69; 90.00]}} & \shortstack{88.98 \\ {\scriptsize [88.34; 89.59]}} & \shortstack{89.85 \\ {\scriptsize [86.81; 91.09]}} & \shortstack{90.12 \\ {\scriptsize [87.41; 91.91]}} \\
          (1, 8) & \shortstack{89.25 \\ {\scriptsize [88.19; 90.28]}} & \shortstack{88.99 \\ {\scriptsize [86.97; 90.03]}} & \shortstack{90.62 \\ {\scriptsize [88.81; 92.12]}} & \shortstack{91.82 \\ {\scriptsize [90.88; 92.88]}} \\
          (1, 16) & \shortstack{89.53 \\ {\scriptsize [88.69; 90.28]}} & \shortstack{89.76 \\ {\scriptsize [88.97; 90.44]}} & \shortstack{91.82 \\ {\scriptsize [90.81; 92.88]}} & \shortstack{91.13 \\ {\scriptsize [90.84; 91.53]}} \\
          \bottomrule
        \end{tabular}
      \end{sc}
    \end{small}
  \vskip -0.1in
\end{table}

\textbf{Performance gaps are small and consistent.} LRU and minGRU achieve marginally higher accuracy on most configurations, but cross-seed variability often exceeds inter-architecture differences.

\textbf{CMRU enables ultra-low power deployment without sacrificing accuracy.} CMRU achieves accuracy within 1--2\% of the best-performing baselines while retaining compatibility with ultra-low power analog implementation. Its quantized structure maps directly to analog primitives, enabling 100~nW operation~\cite{Fyon2026AnalogRecurrent}. 
A single-layer CMRU with $d=16$ achieves 95.38\% on digit recognition and 89.76\% on full vocabulary classification.

\section{Reflection Dynamics and Modular Computation}\label{subsec:parity}

While the CMRU with $\varepsilon = 1$ demonstrates strong performance across tasks requiring persistent memory and long-range dependencies, we now identify a class of problems that expose fundamental limitations shared by all architectures examined in this work. Analyzing these limitations reveals promising directions for future work on extending the expressivity of parallelizable RNNs.

We evaluate all architectures on a binary parity classification task: given an input sequence $x = (x_1, x_2, \ldots, x_L)$ where $x_t \in \{0, 1\}$, the model must predict $y = \left(\sum_{t=1}^{L} x_t\right) \bmod 2$. Models are trained on sequences of length $L \in [50, 400]$ and evaluated on $L \in [50, 1000]$ to assess length generalization, using last pooling with a single recurrent layer of dimension $d=1$.

The CMRU variants ($\varepsilon \geq 0$) do not successfully solve this task, nor do the LRU or minGRU baselines (\cref{tab:Parity_alphaCMRU}). For the CMRU with $\varepsilon = 1$, maintaining a cumulative sum is trivial. However, the modulo operation proves intractable for MLPs when generalizing to unseen sequence lengths, as the MLP must learn a periodic function over an unbounded domain. The root cause lies in the structure of reachable states: cumulative dynamics create countably infinite fixed points organized as a monotonic lattice, whereas the ideal solution for modular arithmetic requires states that alternate periodically. This mismatch suggests that different computational primitives are needed in the recurrent part to organize states in a way that makes the subsequent classification trivial.

\newpage
Negative $\varepsilon$ values enable the required reflection behavior. With $\varepsilon = -1$, the update equation becomes:
\begin{equation}
    h_t =  z_t \odot \left(\Sign{\hat{h}_t} \odot \alpha - h_{t-1}\right) + (1-z_t) \odot h_{t-1}\,.
\end{equation}
During updates ($z_t = 1$), this implements a reflection operation that creates flip dynamics organizing the state space into alternating regions corresponding to even versus odd parity. This organization makes the subsequent modulo-2 classification trivial for the MLP, as the hidden state explicitly encodes parity through its position rather than its magnitude. Evaluating the CMRU with $\varepsilon = -1$ yields perfect accuracy (100\%) with robust length generalization to $L=1000$ (\cref{tab:Parity_alphaCMRU}).

This finding directly aligns with recent work by \citet{grazzi2025unlockingstatetrackinglinearrnns}, which proves that state-tracking tasks (including parity) require transition matrices with eigenvalues having negative real parts. Our $\varepsilon = -1$ formulation instantiates this principle: as shown in \cref{sec:ssm_representation}, the CMRU admits a selective SSM representation where $\varepsilon$ directly controls the eigenvalues of the state transition matrix during updates. Setting $\varepsilon = -1$ yields eigenvalues at $-1$, enabling the reflection dynamics necessary for modular computation.


\section{Perspectives and Limitations}\label{sec:discussion}

\paragraph{Complementarity between linear RNNs and CMRU computations.}
Our experiments reveal complementary strengths: fading-memory RNNs such as LRU and minGRU excel on tasks requiring smooth temporal integration, while CMRU variants show advantages on tasks requiring persistent memory and discrete state preservation. This specialization reflects fundamental differences in dynamics, which could be exploited through the development of hybrid architectures that combine the strengths of linear RNNs and CMRUs.

\paragraph{Toward ultra-low power RNN hardware.}
Two deployment paths are available. The first uses $\varepsilon$-annealing during training, producing models structurally identical to the original BMRU ($\varepsilon=0$) that run directly on existing validated Schmitt trigger hardware~\cite{Fyon2026AnalogRecurrent} at sub-microwatt power levels; the annealing procedure is detailed in \cref{app:annealing}. The second natively implements $\varepsilon=1$ dynamics using a new analog circuit primitive. Circuit details, transistor-level schematics, and simulation results for both paths are provided in \cref{app:hardware}.

\paragraph{The role of $\boldsymbol{\varepsilon}$ and learnable interpolation.}
Experiments have shown that the ideal value of $\varepsilon$ is task-specific and connects to the role of eigenvalues in linear RNNs. Future work could focus on developing methods for automatic optimization of $\varepsilon$ to avoid task-specific parameter tuning.

\paragraph{Limitations and future directions.}
Our evaluation focuses on relatively small model sizes motivated by ultra-low power deployment; scaling behavior at higher capacities and on more complex benchmarks remains an open question. We did not perform task-specific hyperparameter tuning, which may underestimate peak performance for all architectures: our consistent protocol prioritizes fair comparison and robustness over absolute benchmark numbers.

\section{Conclusion}

In this work, we establish persistent memory as a reliable, trainable primitive for parallelizable RNNs. We identified gradient blocking in update mode as a key limitation of the BMRU and introduced a cumulative update formulation that restores gradient flow while preserving persistent memory. The resulting CMRU and $\alpha$CMRU show dramatically improved convergence stability, match fading-memory parallelizable RNNs on smooth integration benchmarks, and substantially outperform them on tasks requiring persistent memory: at minimal capacity, both variants exceed 90\% accuracy on Pathfinder while LRU and minGRU remain at chance level. The CMRU retains the quantized states and discrete dynamics essential for ultra-low power analog implementation.

The CMRU is structurally complementary to fading-memory cells: where LRU, SSMs, and parallelizable gated RNNs excel at smooth integration, the CMRU excels at indefinite retention. This complementarity is a feature rather than a limitation. On character-level language modeling, hybrid architectures combining minGRU with CMRU or $\alpha$CMRU layers achieve the best performance among the architectures we evaluated (\cref{tab:lm_shakespeare}), suggesting that pairing persistent and fading memory within the same network is a productive direction. More broadly, the CMRU is a general-purpose persistent-memory primitive: its associative-scan formulation and structural compatibility with existing SSM pipelines make it a drop-in persistent-memory block in Mamba-style architectures, complementing their fading-memory dynamics with the ability to retain discrete state indefinitely. Persistent memory is relevant wherever discrete state must be retained across arbitrary delays, independently of the power budget. Such memory is already a necessary condition for horizon generalization in reinforcement learning~\cite{Bakija2026MultistabilityRL}, and we anticipate it will prove equally essential in other settings that demand long-range discrete recall.

\clearpage

\section*{Acknowledgements}

J.B.\ and A.F.\ are, respectively, a FRIA grantee (FRIA-40038025) and a Research Fellow (ASP-REN40024838) of the Fonds de la Recherche Scientifique -- FNRS. This work is supported by the Belgian Government through the Federal Public Service Policy and Support. Computational resources have been provided by the Consortium des Équipements de Calcul Intensif (CÉCI), funded by the FNRS under Grant No.\ 2.5020.11 and by the Walloon Region. The present research also benefited from computational resources made available on Lucia, the Tier-1 supercomputer of the Walloon Region, infrastructure funded by the Walloon Region under grant agreement No. 1910247. We thank Pierre Sacré, Loris Mendolia, and Asad Bakija for their valuable help, and the reviewers for their thoughtful and constructive discussions.

\section*{Disclosure}
This work has been the subject of patent applications under numbers EP26175243.0 and EP26175248.9.

\section*{Impact Statement}

The deployment of AI systems faces a dual challenge. First, the computational demands of modern deep learning contribute meaningfully to global energy consumption and carbon emissions. Second, high energy requirements prevent AI deployment in settings where power is severely constrained, leaving important applications unreachable despite their potential societal value.
This work addresses both challenges by developing recurrent neural networks for ultra-low power deployment, achieving competitive performance at sub-microwatt power levels. By reducing energy consumption by three orders of magnitude compared to conventional implementations, this research both limits environmental impact and unlocks applications that were previously infeasible due to energy constraints.
We believe that one of the levers research can provide is the study of specialized solutions that reduce energy consumption without sacrificing accuracy. By treating efficiency as a first-order design constraint rather than an afterthought, we can simultaneously reduce the environmental footprint of AI and expand its reach to energy-constrained settings where it could not otherwise be deployed. Reducing per-inference energy does not automatically ensure positive outcomes, and deployment contexts require ongoing attention from researchers, developers, and policymakers.

\section*{Software and Data}

All code, raw training logs, tabulated results, and reproduction scripts are publicly available.\footnote{\url{https://github.com/julienbrandoit/ICML2026---Improving-Performance-and-Stability-of-Ultra-Low-Power-RNNs.git}} The implementation is provided in \texttt{JAX} \cite{jax2018github, flax2020github}, but the architectures and training procedures can be straightforwardly reproduced using any standard open-source machine learning library such as \texttt{PyTorch} or \texttt{TensorFlow}. The repository includes detailed instructions for reproducing all experiments presented in this paper.

All benchmark datasets used in this work are publicly available: MNIST \cite{lecun1998gradient}, CIFAR-10 \cite{krizhevsky2009learning}, IMDb \cite{maas2011imdb}, ListOps \cite{nangia2018listopsdiagnosticdatasetlatent}, Pathfinder \cite{tay2020longrangearenabenchmark}, Shakespeare \cite{pmlr-v54-mcmahan17a}, and Google Speech Commands \cite{warden2018speechcommandsdatasetlimitedvocabulary}. Synthetic tasks (copy first input, parity) are generated on-the-fly using the provided scripts.

\bibliography{bib/bib}
\bibliographystyle{sty/icml2026}
\clearpage
\appendix
\section{Related Work}\label{app:related_work}

\paragraph{Parallelizable recurrent architectures.}
The renewed interest in RNNs for sequence processing, particularly as alternatives to Transformers~\cite{vaswani2017attentionneed}, can largely be attributed to recent developments in structured state-space models (SSMs)~\cite{gu2021efficientlymodelinglongsequences, gu2022parameterizationinvariantstructuredstatespace}. The Mamba architecture~\cite{gu2024mambalineartimesequencemodeling} and its successor Mamba-2~\cite{dao2024transformersssmsgeneralizedmodels} have demonstrated that parallelizable recurrent models can achieve competitive performance with Transformers on many benchmarks. A fundamental theoretical gap exists between parallelizable architectures and classical nonlinear RNNs: recent work~\cite{merrill2023illusionstateinmodernrecurrent} has shown that both Transformers and linear diagonal SSMs are constrained to the complexity class $\mathrm{TC}^0$, while classical RNNs can express problems in $\mathrm{NC}^1$, which strictly contains $\mathrm{TC}^0$ under widely held complexity-theoretic conjectures. Importantly, our experiments demonstrate that even within $\mathrm{TC}^0$ (the class of problems both Transformers and classical SSMs can theoretically express), significant practical performance differences persist among parallelizable recurrent cells. We show that persistent memory capabilities, which enable robust gradient propagation and perfect information preservation, are key for improving performance on long-range tasks. Parallelizable gated RNNs have emerged as an alternative to SSMs, including xLSTM~\cite{beck2024xlstmextendedlongshortterm}, HGRN~\cite{qin2023hierarchicallygatedrecurrentneural}, MatMul-free LM~\cite{zhu2025scalablematmulfreelanguagemodeling}, and GILR~\cite{martin2018parallelizinglinearrecurrentneural}. We adopt the minimal GRU (minGRU)~\cite{feng2024weregrudminimalgatedrecurrent} as our representative baseline due to its simplicity as one of the most minimal fully parallelizable gated architectures.

\paragraph{Gradient propagation and memory in RNNs.}
In recurrent formulations where processing is inherently local, in contrast to attention mechanisms that enable global information flow, the implementation of persistent memory remains challenging. Persistent memory is intrinsically linked to gradient quality, which directly impacts both long-range modeling capabilities and training stability. While Transformers largely avoid vanishing and exploding gradient issues through their architecture, the return to recurrent formulations reintroduces these fundamental challenges~\cite{hochreiter1991untersuchungen, bengio1994learning}. The dynamical systems perspective on these phenomena~\cite{pascanu2013difficultytrainingrecurrentneural} provides valuable insights into the mechanisms underlying gradient pathologies.

\paragraph{Fading versus persistent memory.}
From a dynamical systems perspective, the distinction between fading memory and persistent memory is well-established~\cite{boyd1985fadingmemory}. Linear dynamics imply monostability: states decay exponentially toward a unique equilibrium, which is the signature of fading memory. True persistent memory requires multistability, which linear dynamics cannot exhibit. However, multistability is rarely discussed in the machine learning literature despite its fundamental importance for implementing persistent memory. Notable exceptions include recent work exploring multistability in neural systems~\cite{vecoven2021bioinspiredbistablerecurrentcell, lambrechts2023warminguprecurrentneural} and the original BMRU formulation~\cite{degeeter2026parallelizablememoryrecurrentunits}.

\paragraph{Unitary and orthogonal RNNs.}
The desirability of eigenvalues near or at unit magnitude for stable gradient propagation has long been recognized~\cite{pascanu2013difficultytrainingrecurrentneural}. One line of work focuses on initialization strategies that place eigenvalues near the unit circle across different timescales~\cite{le2015simplewayinitializerecurrent, gu2020hipporecurrentmemoryoptimal, talathi2015improvingperformancerecurrentneural}, often grounded in theoretical frameworks such as HiPPO theory~\cite{gu2020hipporecurrentmemoryoptimal}. The Linear Recurrent Unit (LRU)~\cite{orvieto2023resurrectingrecurrentneuralnetworks}, which we adopt as our primary baseline, exemplifies this approach with its exponentially parameterized eigenvalues initialized within $[r_{\min}, r_{\max}]$. An alternative approach maintains unitary eigenvalues as a structural constraint during optimization~\cite{arjovsky2016unitaryevolutionrecurrentneural, wisdom2016fullcapacityunitaryrecurrentneural}. Unitary transition matrices can be constructed through products of simple unitary blocks or by optimizing directly on the Stiefel manifold. Related approaches employ the Cayley transform~\cite{helfrich2018orthogonalrecurrentneuralnetworks} and enforce orthogonality through soft or hard constraints~\cite{vorontsov2017orthogonalitylearningrecurrentnetworks, lezcano2019cheaporthogonalityconstraintsdeep}. However, in these approaches, the transition matrices are learned but remain fixed with respect to the input during inference. Our CMRU formulation with $\varepsilon = 1$ achieves unit eigenvalues structurally in both retain and update modes (see \cref{sec:ssm_representation}), combining the benefits of unitary RNNs with the flexibility of modern selective mechanisms.

\paragraph{Input-dependent (selective) mechanisms.}
Modern SSMs have introduced the concept of selectivity, where model parameters vary as functions of the input~\cite{gu2024mambalineartimesequencemodeling}. This input-dependence enables the model to selectively filter and retain information based on context, significantly improving performance on tasks requiring context-aware processing. Our work builds upon this paradigm: as shown in \cref{sec:ssm_representation}, the CMRU admits a selective SSM representation where the state transition matrix $A(x_t)$ depends on the input through the gate $z_t$, enabling discrete switching between memory modes while maintaining exactly unitary eigenvalues during retention. Recent theoretical work by~\citet{grazzi2025unlockingstatetrackinglinearrnns} has demonstrated that state-tracking tasks (including parity) require transition matrices with eigenvalues having negative real parts. Our reflection dynamics with $\varepsilon = -1$ (\cref{subsec:parity}) directly instantiate this principle, achieving perfect parity classification where all other configurations fail.

\paragraph{Gating mechanisms and non-differentiable activations.}
Classical gating mechanisms in LSTMs~\cite{hochreiter1997longshorttermmemory} and GRUs~\cite{cho2014learningphraserepresentationsusing}, as well as modern parallelizable variants such as minGRU~\cite{feng2024weregrudminimalgatedrecurrent}, employ gates constrained to the open interval $(0, 1)$ through smooth sigmoid activations. Critically, these gates can only asymptotically approach the boundary values $0$ and $1$, never reaching them exactly. This design choice, likely stemming from the historical requirement for differentiable non-linearities, fundamentally limits the ability to implement truly persistent memory, as gates cannot achieve the exact values necessary for perfect information preservation. The CMRU employs the Heaviside step function and sign function to achieve exact binary gating. To enable gradient-based learning with these discontinuous activations, we draw on surrogate gradient methodology developed in the spiking neural networks community~\cite{neftci2019surrogategradientlearningspiking}. Surrogate gradients substitute a smooth approximation during the backward pass while maintaining the discrete operation during the forward pass, enabling us to leverage exact binary switching while maintaining trainability through standard backpropagation.

\paragraph{Cumulative and integrator dynamics.}
The concept of integrator dynamics in recurrent systems has received limited attention in the machine learning literature, despite its importance for tasks requiring unbounded accumulation. Classical leaky integrators from neuroscience~\cite{dayan2005theoreticalneuroscience} implement exponential decay combined with input integration, but suffer from fading memory limitations. Perfect integrators, characterized by eigenvalues of exactly unity, are theoretically capable of unbounded accumulation without decay. Our CMRU formulation with $\varepsilon = 1$ provides a mechanism for implementing selective integration through a structural constraint that enforces unitary eigenvalues during both retain and update modes. The resulting cell accumulates quantized inputs into the hidden state while maintaining perfect gradient flow ($\tfrac{\partial h_t}{\partial h_{t-1}} = 1$ in both modes), enabling robust learning of long-range dependencies.

\paragraph{Ultra-low power neural network hardware.}
The deployment of neural networks on resource-constrained devices has motivated significant research in efficient architectures~\cite{banbury2021mlperf, warden2019tinyml}. Most existing implementations, whether on microcontrollers or FPGAs, emulate neural computation in digital hardware~\cite{fedorov2019sparse, chang2015recurrent}. True analog implementations offer fundamentally different scaling properties: latency independent of network depth and energy consumption decoupled from arithmetic precision. The BMRU was specifically designed for hardware-software co-design: its bistable, quantized structure maps directly to Schmitt triggers, a well-characterized analog primitive requiring only transistors. Recent implementations have demonstrated $>$90\% accuracy at 100~nW power consumption for keyword spotting~\cite{fyon2026fullytunableultralowpower, Fyon2026AnalogRecurrent}, three orders of magnitude below digital alternatives. The CMRU with $\varepsilon = 1$ requires different circuit primitives (e.g., gated charge accumulators) but maintains noise resilience through discrete accumulated values. The $\varepsilon$-annealing strategy described in \cref{app:annealing} provides a path to deploy gradient-friendly training while retaining compatibility with existing Schmitt trigger hardware.

Recent work has also explored the hardware realization of parallelizable gated RNNs such as the minGRU in low power regimes. In particular,~\citet{billaudelle2025minimalistswitchedcapacitorcircuitsefficient} propose a mixed-signal implementation of a modified minGRU using switched-capacitor in-memory computing. Their design relies on aggressive quantization, discrete-time operation, and explicit clocking to orchestrate charge transfer, gating, and state updates. While this approach demonstrates impressive energy efficiency compared to fully digital accelerators, its energy consumption and latency remain fundamentally tied to the clock frequency and the required sequencing of sampling, sharing, and analog-to-digital conversion phases.

\section{The $\boldsymbol{\alpha}$CMRU Admits a Selective State-Space Representation} \label{sec:ssm_representation}

We demonstrate that the CMRU and $\alpha$CMRU cells admit a selective state-space representation with unique properties that enable robust persistent memory. This result applies to both the CMRU with fixed quantization scale $\alpha$ and its generalization, the $\alpha$CMRU with input-dependent scaling $\alpha(x_t)$.

Consider a non-linear state-space model (SSM) (or in the modern language of SSMs, a \textit{selective} SSM) defined by
\begin{align}
    h_t = A(x_t) h_{t-1} + B(x_t)x_t\,,
\end{align}
where $h_t \in \mathbb{R}^d$ is the hidden state, $A(x_t) \in \mathbb{R}^{d \times d}$ denotes the state transition matrix and $B(x_t) \in \mathbb{R}^{d \times m}$ denotes the input-to-state mapping, with both matrices parameterized as functions of the input $x_t \in \mathbb{R}^{m}$. Throughout, we treat all vectors as column vectors, so that $x^T_t\in \mathbb{R}^{1 \times m}$ denotes the corresponding row vectors. While this model is nonlinear due to the input-dependence of $A$ and $B$, it remains linear with respect to the hidden state $h_t$, which permits the application of parallel scan algorithms for efficient computation of hidden states. Representative examples of such models include Mamba \cite{gu2024mambalineartimesequencemodeling} and Mamba-2 \cite{dao2024transformersssmsgeneralizedmodels}.

A fundamental limitation of SSMs concerns the implementation of persistent memory as it would require the eigenvalues of the state transition matrix $A$ to be on the unitary circle. Any deviation from this condition results in either vanishing or exploding memory over extended sequences, with the magnitude of deviation scaling as $\lambda^{\Delta t^*}$, where $\lambda$ denotes an eigenvalue and $\Delta t^* \in \mathbb{N}$ represents the number of time steps elapsed since the last informative input.

\begin{proposition}
The CMRU and $\alpha$CMRU cells can be expressed as a non-linear SSM wherein the state transition matrix $A$ maintains unitary eigenvalues during the retain mode, thereby enabling robust persistent memory.
\end{proposition}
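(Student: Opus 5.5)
The plan is to rewrite the ($\alpha$)CMRU update as an explicit selective SSM. The key observation is that the gate $z_t=\Hstep{|\hat h_t|-\beta_t}$, the sign $\Sign{\hat h_t}$ and (for the $\alpha$CMRU) the scale $\alpha_t=W_\alpha x_t+b_\alpha$ are all functions of $x_t$ alone, by \cref{eq:bmru_candidate,eq:bmru_threshold,eq:bmru_gate,eq:alpha_cmru}. None of them depends on $h_{t-1}$. So every nonlinearity can be absorbed into input-dependent matrices, and the recurrence stays linear in the hidden state.

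First I expand \cref{eq:cmru} and collect the terms in $h_{t-1}$:
\begin{equation*}
h_t = \bigl(1 - (1-\varepsilon) z_t\bigr)\odot h_{t-1} + z_t\odot \Sign{\hat h_t}\odot \alpha_t ,
\end{equation*}
where $\alpha_t\equiv\alpha$ for the CMRU. I then define the diagonal transition matrix
\begin{equation*}
A(x_t)=\mathrm{diag}\bigl(1-(1-\varepsilon)z_t\bigr),
\end{equation*}
whose entries are $1$ where $z_{t,i}=0$ and $\varepsilon$ where $z_{t,i}=1$. The remaining term is a vector $u(x_t)=z_t\odot\Sign{\hat h_t}\odot\alpha_t$ that depends only on the input.

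To match the form $B(x_t)x_t$ literally, I split on $x_t$. If $x_t\neq 0$, I take the rank-one choice
\begin{equation*}
B(x_t)=u(x_t)\,x_t^T/\|x_t\|^2 ,
\end{equation*}
which gives $B(x_t)x_t=u(x_t)$; this is where the paper's column and row vector convention enters. If $x_t=0$, I would instead augment the input with a constant coordinate $1$ (equivalently, absorb the biases), or simply allow an input-dependent affine drive term, as selective SSMs such as Mamba already do.

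With this representation the eigenvalues of $A(x_t)$ can be read off the diagonal. For each coordinate in retain mode ($z_{t,i}=0$) the eigenvalue is exactly $1$, which lies on the unit circle. Over any retain interval of length $\Delta t^*$ the corresponding factor is $1^{\Delta t^*}=1$, so the state component is neither vanishing nor exploding. This is exactly the persistent-memory criterion stated just before the proposition. As a corollary, in update mode the eigenvalue is $\varepsilon$. With $\varepsilon=1$ this gives unit eigenvalues in both modes, and with $\varepsilon=-1$ it gives eigenvalue $-1$, which supports the claims made in \cref{subsec:parity} and in the related-work section.

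The main obstacle is representational, not analytical. The gating nonlinearities (Heaviside and sign) have to sit entirely inside $A$ and $B$, and the drive term has to fit the strict $B(x_t)x_t$ form even when $x_t=0$ or when the bias terms $b_x$, $b_\beta$, $b_\alpha$ make $u$ nonzero at zero input. I would resolve this by working with the augmented input $\tilde x_t=(x_t,1)$. Neither remedy affects linearity in $h_{t-1}$, so parallel-scan compatibility is preserved.
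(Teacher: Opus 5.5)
Your proposal is correct and follows the paper's proof essentially verbatim: the same diagonal transition $A(x_t)=\mathrm{diag}(1-z_t+\varepsilon z_t)$ (identical to your $\mathrm{diag}(1-(1-\varepsilon)z_t)$), the same rank-one drive $B(x_t)=u(x_t)\,x_t^T/(x_t^T x_t)$ (the paper's canonical choice $C=0$), and the same reading of eigenvalue $1$ in retain mode and $\varepsilon$ in update mode. Your handling of the case $x_t=0$ (and of bias-induced nonzero drive) by augmenting the input to $(x_t,1)$ is a small but genuine tightening, since the paper divides by $x_t^T x_t$ without addressing it.
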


\begin{proof}
We rewrite the CMRU (and $\alpha$CMRU) update equation as
\begin{align}
    h_t = \text{diag}(1 - z_t + \varepsilon z_t) h_{t-1} + z_t \odot \Signalpha{\hat{h}_t}\,,
\end{align}
where $\text{diag}(\cdot)$ constructs a square diagonal matrix from its vector argument, and where $\Signalpha{\cdot}$ is a short notation for $\alpha\odot \Sign{\cdot}$ or $\alpha_t \odot \Sign{\cdot}$, such that we handle both the CMRU and the $\alpha$CMRU through a single notation. The input-driven term can be decomposed as $z_t \odot \Signalpha{\hat{h}_t} = B(x_t) x_t$. We note that $B(x_t)$ is not uniquely defined. Indeed, consider the general form
\begin{align}
    B(x_t) = B_{0}(x_t) + C\,,
\end{align}
where $C \in \mathbb{R}^{d \times m}$ is an arbitrary matrix whose rows lie in the null space of $x_t$ (i.e., $C x_t = 0$), and
\begin{align}
    B_{0}(x_t) = (z_t \odot \Signalpha{\hat{h}_t}) \frac{x^T_t}{x^T_tx_t}\,.
\end{align}
Then,
\begin{align}
    \left(B_{0}(x_t) + C\right) x_t &= B_{0}(x_t)x_t + Cx_t \nonumber\\
    &= B_{0}(x_t)x_t \nonumber\\
    &= (z_t \odot \Signalpha{\hat{h}_t}) \frac{x^T_t}{x^T_tx_t} x_t \nonumber\\
    &= (z_t \odot \Signalpha{\hat{h}_t}) \frac{x^T_tx_t}{x^T_tx_t} \nonumber\\
    &= z_t \odot \Signalpha{\hat{h}_t}\,,
\end{align}
establishing the desired decomposition. We adopt the canonical form with $C = 0$, yielding the following non-linear SSM representation:
\begin{align}
    A(x_t) &= \text{diag}(1 - z_t + \varepsilon z_t)\,, \\
    B(x_t) &= (z_t \odot \Signalpha{\hat{h}_t}) \frac{x^T_t}{x^T_tx_t}\,.
\end{align}
This representation holds for both the CMRU (with fixed quantization scale $\alpha$) and the $\alpha$CMRU (with input-dependent $\alpha(x_t)$), as the scaling parameter is absorbed into $\Signalpha{\hat{h}_t}$ within the $B(x_t)$ matrix.
\end{proof}

\begin{remark}
During the retain mode, characterized by $z_t = 0$, the state transition matrix reduces to the identity:
\begin{align}
    \left.A(x_t)\right|_{z_t = 0} = I\,,
\end{align}
which possesses unit eigenvalues exclusively. This interpretation of the CMRU and $\alpha$CMRU as a special case of non-linear SSM with unitary eigenvalues during the retain mode provides a theoretical justification for their ability to model long range dependencies effectively. Unitary eigenvalues combined with the hysteresis non-linearity ensure that \emph{information can be preserved indefinitely and robustly in the hidden state}. The robustness is ensured by the hysteretic window, which acts as a noise filter that prevents small perturbations from spuriously triggering gate transitions and corrupting the stored memory.

During the update mode, characterized by $z_t = 1$, the state transition matrix reduces to the diagonal matrix:
\begin{align}
    \left.A(x_t)\right|_{z_t = 1} = \text{diag}(\varepsilon)\,.
\end{align}
Interestingly, it means that for $\varepsilon = 1$, the state transition matrix reduces to identity for both the retain and the update mode. This has an important consequence for deep architectures: gradient magnitude equals unity simultaneously along the sequence axis (through time) and along the depth axis (across stacked layers), since each recurrent block contributes a unit Jacobian in both modes. The CMRU with $\varepsilon = 1$ therefore acts as a residual connection in time, making training of deep stacks as stable as training shallow ones. We can also observe that the parity example (\cref{subsec:parity}), requiring at least one negative eigenvalue, can be directly related to the recent work of \cite{grazzi2025unlockingstatetrackinglinearrnns} under the matrix-vector transition formalization.
\end{remark}

\subsection{Fixed Points in Retain Mode}

In retain mode ($z_t = 0$), setting $z_t = 0$ in \cref{eq:cmru} gives $h_t = h_{t-1}$
directly. Every $h \in \mathbb{R}^d$ is therefore a fixed point of the update equation,
for any value of $\varepsilon$: the CMRU retains any state indefinitely in the absence
of a sufficiently strong input, which is the formal expression of persistent memory.
The hysteretic threshold $\beta_t$ controls how strong an input must be to exit retain
mode and trigger an update.

\section{Optimal Quantization on \texorpdfstring{$[-1,1]$}{[-1,1]} and Minimal MAE} \label{app:quantized}

\subsection{Theoretical Derivations}

We consider the problem of encoding a scalar real value $x_0 \in [-1,1]$ using $b=4$ binary memory slots, and we derive the minimal achievable mean absolute error (MAE) under an optimal fixed-rate quantization scheme. This encoding problem is associated with the \emph{copy first input (continuous)} tasks. While this result is well known in classical quantization theory, we present it here to establish the absolute lower bound of error that any quantized learning cell can achieve, thereby providing a fundamental limit against which to evaluate the (quantized) BMRU.

For a uniform source distribution on a bounded interval, the optimal scalar quantizer (for both MAE and MSE) is a uniform quantizer with midpoint reconstruction \cite{quantization}. With $B = 2^b$ quantization levels, the interval $[-1,1]$ is partitioned into bins of equal width:
\begin{equation}
    \Delta = \frac{2}{B}\,.
\end{equation}
Each bin is associated with its midpoint as the reconstruction value. The quantization error $e(x_0) = \hat{x}_0 - x_0$ therefore lies in the interval $[-\Delta/2, \Delta/2]$. Conditioned on a given bin, $e(x_0)$ is uniformly distributed on this interval, and its expected absolute value is:
\begin{equation}
    \mathbb{E}\left[|e(x_0)|\right]
    = \frac{1}{\Delta} \int_{-\Delta/2}^{\Delta/2} |u| \, du
    = \frac{\Delta}{4}\,.
\end{equation}
Since all bins have equal probability under the uniform distribution, this value is also the global MAE. Substituting $\Delta$ yields:
\begin{equation}
    \mathcal{E}_{\mathrm{MAE}}^{\star} = \frac{1}{2^{b+1}}\,.
\end{equation}
Although the interval is symmetric around zero and thus includes signed values, the sign of $x_0$ does not require a dedicated bit. The sign is implicitly encoded by the quantization index itself, as the quantization levels are distributed across both negative and positive values.

\subsection{Extensions to the Cumulative Variants}

At first glance, one might expect the optimal quantization bound $\mathcal{E}_{\mathrm{MAE}}^{\star}$ to not apply to the cumulative variants of the BMRU (i.e., when $\varepsilon \neq 0$). Indeed, the cumulative formulation allows the hidden state to accumulate values over multiple timesteps, potentially enabling the state to be pushed toward any multiple of the quantization level $\alpha$. This suggests that, in principle, a CMRU ($\varepsilon \neq 0$) could integrate information over time to progressively refine its encoding and achieve representations that transcend the discrete $2^b$ levels available when $\varepsilon = 0$.

However, two properties of our specific task and architecture prevent this theoretical advantage from being realized in practice:

\textbf{(1) State-independent updates.} In all the recurrent cells we consider, the relative update to the hidden state at each timestep is computed independently of the previous state value. This architectural constraint means that the cell cannot adaptively modulate its updates based on the current encoding error or state value with a single-block architecture.

\textbf{(2) Single-timestep information presentation.} In the copy first input task, the information to be encoded is presented at a single timestep ($t=0$). This structure eliminates any opportunity for the cell to integrate information across multiple timesteps to progressively refine its approximation.

These two conditions together ensure that, despite the theoretical flexibility of cumulative state updates when $\varepsilon \ne 0$, the CMRU remains subject to the same fundamental quantization constraint.

\section{Training with $\boldsymbol{\varepsilon}$ Annealing for Hardware Deployment}\label{app:annealing}

Our experiments establish that $\varepsilon = 1$ provides superior gradient flow, convergence stability, and reduced initialization sensitivity. However, existing ultra-low power analog implementations are designed for $\varepsilon = 0$ (the original BMRU), where the architecture maps directly onto Schmitt trigger circuits~\cite{Fyon2026AnalogRecurrent}. The CMRU with $\varepsilon = 1$ requires alternative circuit primitives such as gated charge accumulators. We explore an intermediate path: training with $\varepsilon = 1$ to leverage optimal gradient propagation, then gradually decaying $\varepsilon$ toward zero to produce models compatible with existing hardware.

\textbf{Annealing protocol.} We interpolate $\varepsilon$ from 1 to 0 during training according to a piecewise linear schedule: $\varepsilon$ remains at 1 for the first 5\% of training, decays linearly to 0 over the next 70\%, then stays at 0 for the final 25\%. This stationary phase addresses the theoretical concern that time-varying $\varepsilon$ renders the optimization problem non-stationary. We only save checkpoints after $\varepsilon = 0$ is reached, ensuring reported performance reflects the deployable configuration.

\textbf{Preliminary results.} \cref{tab:part4_sMNIST_decaying} presents accuracy on sMNIST across state dimensions $d \in \{4, 8, 16, 32, 64, 128, 256\}$ for three regimes: constant $\varepsilon = 0$ (i.e., the original BMRU), constant $\varepsilon = 1$, and decaying $\varepsilon$. For the quantized CMRU, annealing substantially improves over constant $\varepsilon = 0$ (88\% versus 30\% at $d=32$) but does not fully recover constant $\varepsilon = 1$ performance (96\% at $d=32$). The gap is most pronounced at small dimensions where discrete state constraints dominate. For the $\alpha$CMRU, annealing closely matches constant $\varepsilon = 1$ across nearly all dimensions (97\% versus 97\% at $d=32$, 98\% versus 97\% at $d=64$), suggesting continuous states successfully exploit gradient flow during training even when deployment requires $\varepsilon = 0$ dynamics.

These results demonstrate that $\varepsilon$ annealing provides a viable path to deploy gradient-friendly training on existing ultra-low power hardware, though quantized architectures at minimal capacity may require $\varepsilon$-aware circuit (or $\alpha$-aware circuit) implementations to fully realize performance gains.

\begin{table*}[t!]
  \caption{Accuracy (\%) for sMNIST with decaying epsilon across state dimensions. Note that $\varepsilon = 0$ corresponds to the original BMRU.}
  \label{tab:part4_sMNIST_decaying}
  \begin{center}
    \begin{small}
      \begin{sc}
        \begin{tabular}{lccccccc}
          \toprule
          & \multicolumn{7}{c}{State Dimension} \\
          \cmidrule(lr){2-8}
          & 4 & 8 & 16 & 32 & 64 & 128 & 256 \\
          \midrule
          \shortstack{CMRU \\ {\scriptsize decay $\varepsilon$}} &
          \shortstack{30.29 \\ {\scriptsize [20.94; 47.72]}} &
          \shortstack{51.68 \\ {\scriptsize [32.41; 62.34]}} &
          \shortstack{79.12 \\ {\scriptsize [66.75; 85.19]}} &
          \shortstack{87.94 \\ {\scriptsize [81.88; 92.53]}} &
          \shortstack{95.46 \\ {\scriptsize [95.00; 96.44]}} &
          \shortstack{96.78 \\ {\scriptsize [96.31; 97.22]}} &
          \shortstack{97.03 \\ {\scriptsize [96.62; 97.56]}} \\
          \shortstack{CMRU \\ {\scriptsize $\varepsilon = 1$}} &
          \shortstack{83.61 \\ {\scriptsize [80.97; 85.09]}} &
          \shortstack{91.49 \\ {\scriptsize [90.53; 92.69]}} &
          \shortstack{94.83 \\ {\scriptsize [93.12; 95.84]}} &
          \shortstack{95.92 \\ {\scriptsize [95.41; 96.19]}} &
          \shortstack{97.05 \\ {\scriptsize [96.69; 97.28]}} &
          \shortstack{96.20 \\ {\scriptsize [95.16; 97.00]}} &
          \shortstack{96.16 \\ {\scriptsize [95.81; 96.38]}} \\
          \shortstack{CMRU \\ {\scriptsize $\varepsilon = 0$}} &
          \shortstack{14.52 \\ {\scriptsize [11.69; 18.62]}} &
          \shortstack{17.09 \\ {\scriptsize [11.69; 21.41]}} &
          \shortstack{21.76 \\ {\scriptsize [18.69; 25.09]}} &
          \shortstack{35.32 \\ {\scriptsize [28.12; 44.41]}} &
          \shortstack{47.49 \\ {\scriptsize [46.16; 49.38]}} &
          \shortstack{54.82 \\ {\scriptsize [48.62; 73.69]}} &
          \shortstack{80.11 \\ {\scriptsize [76.09; 85.34]}} \\
          \midrule
          \shortstack{$\alpha$CMRU \\ {\scriptsize decay $\varepsilon$}} &
          \shortstack{88.13 \\ {\scriptsize [87.28; 89.41]}} &
          \shortstack{93.82 \\ {\scriptsize [93.22; 94.59]}} &
          \shortstack{96.12 \\ {\scriptsize [95.59; 96.88]}} &
          \shortstack{96.94 \\ {\scriptsize [96.22; 97.28]}} &
          \shortstack{97.73 \\ {\scriptsize [97.69; 97.84]}} &
          \shortstack{98.09 \\ {\scriptsize [97.88; 98.25]}} &
          \shortstack{98.07 \\ {\scriptsize [97.88; 98.25]}} \\
          \shortstack{$\alpha$CMRU \\ {\scriptsize $\varepsilon = 1$}} &
          \shortstack{88.34 \\ {\scriptsize [87.38; 89.53]}} &
          \shortstack{93.49 \\ {\scriptsize [91.56; 94.62]}} &
          \shortstack{95.43 \\ {\scriptsize [93.03; 96.47]}} &
          \shortstack{96.76 \\ {\scriptsize [96.22; 97.00]}} &
          \shortstack{96.88 \\ {\scriptsize [96.66; 97.12]}} &
          \shortstack{97.16 \\ {\scriptsize [96.72; 97.91]}} &
          \shortstack{97.25 \\ {\scriptsize [96.69; 98.06]}} \\
          \shortstack{$\alpha$CMRU \\ {\scriptsize $\varepsilon = 0$}} &
          \shortstack{32.79 \\ {\scriptsize [24.62; 42.94]}} &
          \shortstack{40.78 \\ {\scriptsize [38.38; 42.75]}} &
          \shortstack{44.76 \\ {\scriptsize [38.56; 55.16]}} &
          \shortstack{48.86 \\ {\scriptsize [43.12; 64.84]}} &
          \shortstack{69.60 \\ {\scriptsize [66.12; 75.69]}} &
          \shortstack{88.30 \\ {\scriptsize [83.78; 94.59]}} &
          \shortstack{94.06 \\ {\scriptsize [92.06; 95.28]}} \\
          \bottomrule
        \end{tabular}
      \end{sc}
    \end{small}
  \end{center}
  \vskip -0.1in
\end{table*}

\section{Hardware Implementation}\label{app:hardware}

\subsection{Deployment via $\boldsymbol{\varepsilon}$-Annealing on Existing Hardware}

The $\varepsilon$-annealing strategy (detailed in \cref{app:annealing}) trains with $\varepsilon=1$
for optimal gradient flow, then decays $\varepsilon$ linearly to 0. The resulting model is
structurally identical to the original BMRU and deploys on existing Schmitt trigger circuits
without any new circuit primitives, achieving sub-microwatt operation as validated in
\citet{Fyon2026AnalogRecurrent}.

\subsection{Proof-of-Concept Native CMRU Circuit}

The native CMRU with $\varepsilon=1$ requires a different analog primitive. We propose a
cascade of BMRU cells interleaved with Differential Pair Integrator (DPI) blocks. Each DPI
integrates toward $\alpha$ when the corresponding BMRU output is high, and the hysteresis
windows are shifted by $\alpha$ at each stage to implement progressive triggering. The output
is the sum of all BMRU outputs, implementing the cumulative state update. All parameters are
tunable via bias currents, and both BMRU and DPI blocks operate in the subthreshold regime.

\begin{figure*}[htbp!]
  \vskip 0in
  \begin{center}
    \centerline{\includegraphics[width=0.75\textwidth]{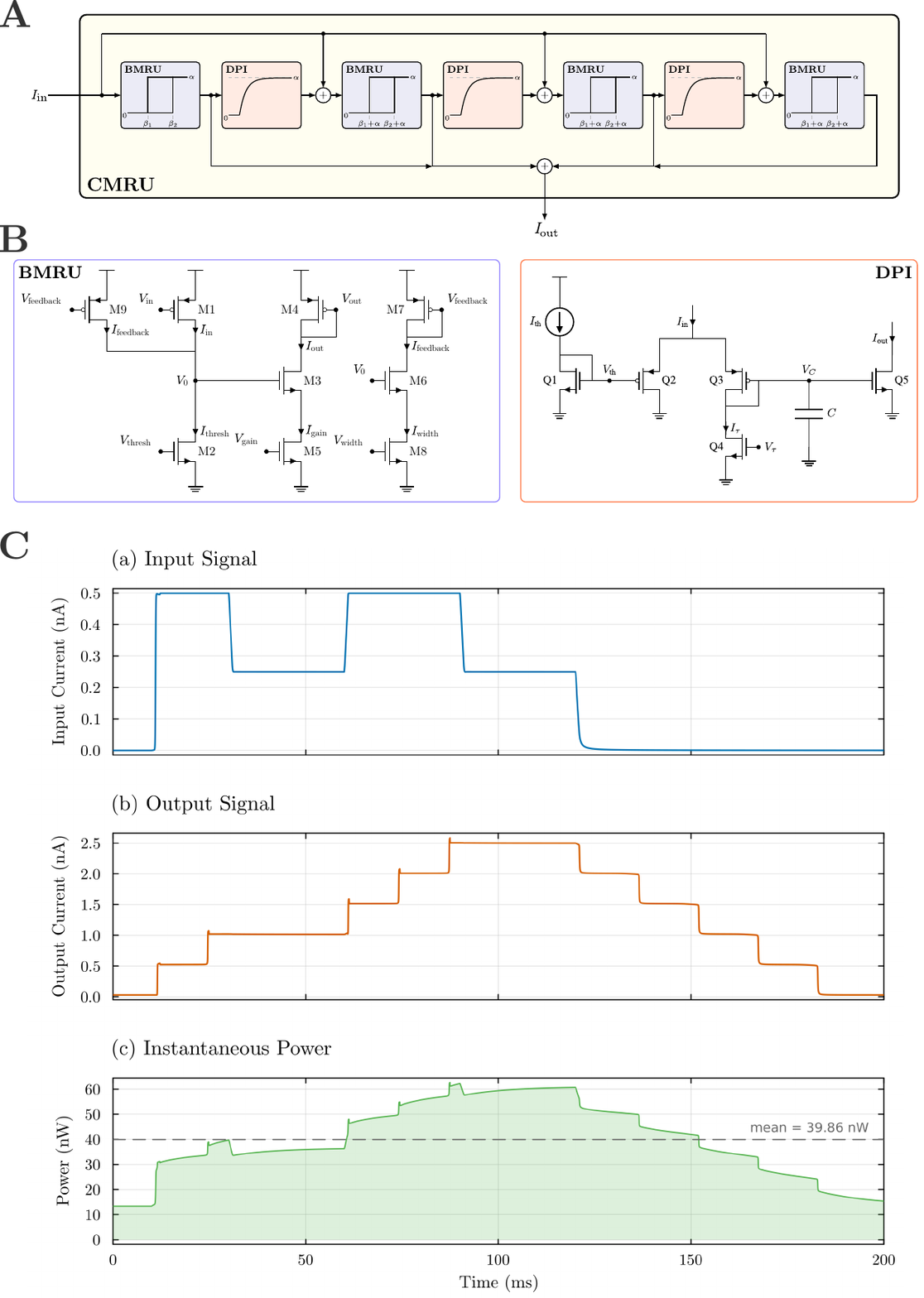}}
    \caption{
          Native CMRU analog circuit.
          \textbf{(A)} Block diagram of the conceptual architecture. The CMRU is realized as a cascade of BMRU cells~\cite{Fyon2026AnalogRecurrent} interleaved with Differential Pair Integrator (DPI) blocks. Each BMRU stage occupies one level of the fixed-point ladder, and each DPI integrates the output of the preceding stage toward a multiple of $\alpha$, shifting the hysteresis window of the next stage. The finite set of reachable fixed points, located at integer multiples of $\alpha$, matches the discrete cumulative state values that naturally emerge from training.
          \textbf{(B)} Transistor-level schematics of the BMRU cell (left) and of the DPI block (right). Both blocks operate in the subthreshold regime.
          \textbf{(C)} Transistor-level Cadence simulation results. The high threshold of the BMRUs is set to \SI{0.35}{\nano\ampere}, the low threshold to \SI{0.15}{\nano\ampere}, $\alpha$ to \SI{0.5}{\nano\ampere}, and the DPI time constant to \SI{10}{\milli\second}. Top: time evolution of the input signal. Depending on amplitude, the input either drives the memory to one of its two states (high or low) or, at the retain level of \SI{0.25}{\nano\ampere}, leaves the stored value unchanged. Middle: time evolution of the output signal. In set mode, the output is incremented (or decremented) by $\alpha$ every \SI{10}{\milli\second}; in retain mode, the output is held indefinitely. Bottom: time evolution of the instantaneous power. Consumption scales with the output level but remains deep in the ultra-low-power regime, well below the
          microwatt range.
    }
    \label{fig:cmru_hardware}
  \end{center}
\end{figure*}

Transistor-level schematics and Cadence simulations (\cref{fig:cmru_hardware}) confirm
ultra-low power operation at 39.86~nW. Circuit area, power, and mismatch robustness
optimization are left for future work.

\section{Architecture and Training Details} \label{app:training}

\subsection{Backbone Architecture Details} \label{subsection:backbone}

All experiments are performed by only changing the type of recurrent cell in the backbone illustrated in \cref{fig:backbone}. In all experiments, the model dimension is fixed to $m = 256$. The model dimension $m$ refers to the dimensionality of sequences between layers, while the state dimension $d$ refers to the dimensionality of the recurrent cell hidden state, and $r$ to the number of blocks. Note that $m$ and $d$ are independent: the recurrent cell projects from $m$ to $d$ internally, and projects back to $m$ for the output.

\begin{figure}[htbp!]
  \vskip 0in
  \begin{center}
    \centerline{\includegraphics[width=0.6\columnwidth]{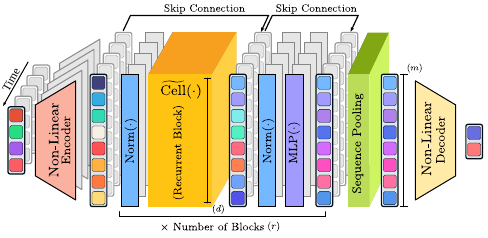}}
    \caption{
      Common backbone architecture used across all experiments. The architecture interleaves recurrent layers with MLPs, skip connections, and normalization. We study different recurrent cell types (CMRU, $\alpha$CMRU, LRU, minGRU) by varying the internal mechanism within $\CellB{\cdot}$ while keeping the backbone structure fixed. The model dimension is fixed at $m = 256$ across all experiments. The state dimension $d$ refers to the dimensionality of the recurrent cell hidden state, and $r$ to the number of blocks.
    }
    \label{fig:backbone}
  \end{center}
\end{figure}

\subsection{Cell Output and Integration}

Each cell defines an internal state transition law that maps the input sequence to a hidden state sequence $h_{1:T} \in \mathbb{R}^{d\times T}$ via $h_{1:T} = \Cell{x_{1:T}; h_0}$, where $h_0$ represents the initial state and $d$ denotes the state dimension. All architectures considered in this work are parallelizable through associative scan algorithms \cite{blelloch1989scans, martin2018parallelizinglinearrecurrentneural, smith2022simplifiedstatespacemodels}, enabling efficient computation of hidden states across the entire sequence.

From the hidden sequence, the output sequence is generated through an input-gated projection of the normalized hidden sequence:
\begin{equation}\label{eq:cell_output}
\begin{aligned}
y_{1:T}
  &= \CellB{x_{1:T}} \\
  &= \Norm{\Linear{\Cell{x_{1:T}; h_0}}} \\
  &\quad \odot \sigma(\Linear{x_{1:T}})\,,
\end{aligned}
\end{equation}
where $\sigma(\cdot)$ is the element-wise sigmoid function, $\Linear{\cdot}$ denotes a learned linear transformation, and $\Norm{\cdot}$ represents layer normalization. The normalization ensures that even integrator cells ($\varepsilon = 1$) output sequences whose magnitudes do not explode, maintaining proper scale for subsequent MLPs and blocks. The input-dependent gate $\sigma(\Linear{x_{1:T}})$ ensures that each cell can learn to decouple information integration from information usage over the sequence. The complete block described by \cref{eq:cell_output} constitutes a sub-layer $\SubLayer{\cdot}$ that is integrated into the broader backbone architecture.

\subsection{MLP and Normalization Layers}

We use $\sigma(\cdot) = \GLU{\cdot}$ \cite{dauphin2017languagemodelinggatedconvolutional} within each MLP. The MLPs are defined as
\begin{align*}
    \MLP{x} = \Linear{\Dropout{\sigma(\Linear{x})}},
\end{align*}
with the input and output dimension equal to the model dimension ($m = 256$) and a hidden dimension of $4\times m$. The MLPs are applied point-wise, i.e., independently on each element in the sequence.

Normalization and skip-connection layers are applied point-wise and following a pre-norm scheme, i.e.,
\begin{align*}
    y = \upsilon \odot x + \SubLayer{\Norm{x}},
\end{align*}
where $\upsilon \in \mathbb{R}^{m}$ is a vector of learnable parameters, initialized to unity. The normalization used is LayerNorm \cite{ba2016layernormalization}. In this formulation, $\SubLayer{\cdot}$ can be $\MLP{\cdot}$ or $\CellB{\cdot}$ as described in \cref{eq:cell_output}.

\subsection{Sequence Pooling}

The backbone architecture must reduce the variable-length processed sequence to a fixed-size representation for prediction. We employ two pooling strategies depending on the task requirements:

\textbf{Last pooling} uses only the final timestep of the processed sequence for prediction. Formally, given the output sequence $y_{1:T} \in \mathbb{R}^{m \times T}$ from the final recurrent layer, the pooled representation is $y_{\text{pool}} = y_T \in \mathbb{R}^m$. This strategy is particularly suited for tasks requiring long-term memory, as it tests both the retention of information across the entire sequence and the ability to learn from long-range dependencies when supervision occurs only at sequence end.

\textbf{Mean pooling} averages the output sequence across all timesteps. The pooled representation is computed as $y_{\text{pool}} = \frac{1}{T}\sum_{t=1}^{T} y_t \in \mathbb{R}^m$. This strategy aggregates information from the entire sequence, making it more robust to local variations and suitable for tasks where relevant information is distributed throughout the sequence rather than concentrated at specific positions.

The pooled representation $y_{\text{pool}}$ is then passed through a final decoder to produce task-specific predictions.

\subsection{Positional Encoding}

All recurrent cells receive explicit positional information through sinusoidal positional encoding (PE) \cite{vaswani2017attentionneed}. The PE is concatenated to the input at each timestep and passed through a learned linear projection before being fed to the recurrent cell. Formally, for input $x_t \in \mathbb{R}^{m}$ at position $t$, the cell receives:
\begin{equation}
    \tilde{x}_t = \Linear{[x_t; \text{PE}(t)]} \in \mathbb{R}^m\,,
\end{equation}
where $\text{PE}(t) \in \mathbb{R}^{d_{\text{PE}}}$ denotes the sinusoidal positional encoding at position $t$, and $[\cdot;\cdot]$ denotes concatenation.

This design allows architectures that do not inherently require positional information (such as the LRU) to learn to suppress it through the projection if it acts as harmful noise, while enabling cells that benefit from explicit position awareness to leverage it when beneficial. Positional encoding has been shown to be necessary for the original BMRU with $\varepsilon = 0$ \cite{degeeter2026parallelizablememoryrecurrentunits}, where the lack of inherent temporal dynamics makes position information critical for sequence processing, but also for single-block minGRU architectures \cite{feng2024weregrudminimalgatedrecurrent}.

\subsection{Input Encoding and Output Decoding}

Raw task-specific inputs and outputs are transformed to and from the model dimension $m$ through nonlinear encoding and decoding layers.

The input encoder transforms task-specific input $\hat{x}_t \in \mathbb{R}^{d_{\text{task}}}$ to the model dimension:
\begin{equation}
    x_t = \tilde{x}_t + \MLP{\tilde{x}_t}, \quad \text{where } \tilde{x}_t = \Linear{\hat{x}_t}\,.
\end{equation}

Similarly, the output decoder transforms from model dimension back to task-specific output dimension $d_{\text{out}}$:
\begin{equation}
    \hat{y} = \tilde{y} + \MLP{\tilde{y}}, \quad \text{where } \tilde{y} = \Linear{y_\text{pool}}\,,
\end{equation}
where $y_{\text{pool}} \in \mathbb{R}^m$ is the pooled representation.

This nonlinear encoding and decoding design ensures that single-layer recurrent architectures are not limited by insufficient point-wise nonlinear computation capacity, but rather by the recurrent mechanism itself. This allows for fair architectural comparisons where observed performance differences can be attributed to the recurrent cell design rather than to inadequate input/output transformations.

\subsection{Recurrent Cell Implementations}

This section provides detailed specifications of the recurrent cells evaluated in our experiments. For the CMRU and $\alpha$CMRU architectures, we refer to \cref{eq:bmru_candidate,eq:bmru_threshold,eq:bmru_gate,eq:cmru,eq:alpha_cmru} in the main text.

\subsubsection{Linear Recurrent Unit (LRU)} \label{subsection:lru}

The LRU \cite{orvieto2023resurrectingrecurrentneuralnetworks} implements a diagonal complex-valued state-space model with strictly decaying exponentially parameterized eigenvalues for numerical stability. The state update follows:
\begin{align}
    x_t &= \Lambda \odot x_{t-1} + B u_t\,, \\
    y_t &= \mathrm{Re}(C x_t) + D u_t\,,
\end{align}
where $x_t \in \mathbb{C}^d$ is the complex-valued hidden state, $u_t \in \mathbb{R}^m$ is the input, $y_t \in \mathbb{R}^d$ is the output (before projection), and $\Lambda \in \mathbb{C}^d$ is a diagonal matrix of eigenvalues. The eigenvalues are parameterized as:
\begin{equation}
    \Lambda = \exp(-\exp(\nu) + i\exp(\theta))\,,
\end{equation}
where $\nu, \theta \in \mathbb{R}^d$ are learned parameters initialized such that eigenvalue magnitudes lie in $[r_{\min}, r_{\max}]$ with uniformly distributed phases in $[0, 2\pi]$. The input matrix $B \in \mathbb{C}^{m \times d}$ is normalized by $\gamma = \sqrt{1 - |\Lambda|^2}$ to maintain consistent input scaling across different eigenvalue magnitudes. The output matrices $C \in \mathbb{C}^{d \times d}$ and $D \in \mathbb{R}^{m \times d}$ are learned parameters.

The LRU is fully parallelizable via associative scan over the linear recurrence.

\subsubsection{Minimal Gated Recurrent Unit (minGRU)} \label{subsection:mingru}

The minGRU \cite{feng2024weregrudminimalgatedrecurrent} simplifies the traditional GRU by removing hidden state dependencies from the gating mechanism, enabling full parallelization while maintaining competitive performance. The update rule is:
\begin{align}
    z_t &= \sigma(W_z x_t + b_z)\,, \\
    \tilde{h}_t &= W_h x_t + b_h\,, \\
    h_t &= (1 - z_t) \odot h_{t-1} + z_t \odot \tilde{h}_t\,,
\end{align}
where $h_t \in \mathbb{R}^d$ is the hidden state, $x_t \in \mathbb{R}^m$ is the input, $z_t \in [0,1]^d$ is the update gate, $\tilde{h}_t \in \mathbb{R}^d$ is the candidate hidden state, and $\sigma(\cdot)$ denotes the sigmoid function.

A critical architectural property is that the sigmoid gate ensures $z_t \in (0, 1)$ (open interval), meaning the values $z_t = 0$ and $z_t = 1$ are only approached asymptotically but never attained. Consequently, the minGRU lacks a true retain mode where $z_t = 0$ would preserve the state unchanged, and lacks a true reset mode where $z_t = 1$ would completely overwrite the state. At every timestep, both the previous state and the candidate contribute to the update, preventing indefinite information preservation.

The update equation is a linear recurrence in $h_t$ and can be parallelized via associative scan.

\subsection{Training Generalities}

All experiments presented in this work follow a similar training procedure with hyperparameters that are not optimized per task. The objective is to observe (and encourage) the robustness of the cells to the use of standard hyperparameter values, while acknowledging that hyperparameter optimization could improve absolute performance metrics. We refer readers to \cref{section:protocol} for the motivations.

Optimization is performed with AdamW \cite{loshchilov2019decoupledweightdecayregularization} with $\beta_1 = 0.9$, $\beta_2 = 0.99$ and $\epsilon = 10^{-8}$ for all experiments. The weight decay (WD) value used is $0.0001$. The learning rate is set to $10^{-3}$ and follows a common scheduler across all experiments: (1) a linear warmup phase over $1\%$ of iterations starting from 0 up to the value of $10^{-3}$; followed by (2) a cosine decay to the minimum value of $10^{-5}$ spread over all remaining iterations. The gradient norm is clipped to 1 during training.

We adopt the terminology of iterations or, equivalently, epochs in the sense of a single gradient descent step throughout this work. The maximum number of epochs varies from task to task to ensure convergence within the limits of a reasonable computational budget (see \cref{tab:training_budget}). For each task, the current architecture is evaluated every 64 iterations on 20 batches sampled from the validation set. The checkpoint achieving the best performance during this evaluation is the one saved and used during the final evaluation procedure on the test set. Early stopping is performed if the performance is optimal during 100 consecutive evaluations (that is, 100\% accuracy). A batch size of 64 is used.

\begin{table*}[t!]
\caption{
  Maximum number of training iterations (epochs) allocated per task.
  All tasks use a batch size of 64, with evaluation performed every 64 iterations on 20 validation batches.
  Early stopping is triggered if optimal performance (100\% accuracy for classification tasks) is maintained for 100 consecutive evaluations.
}
\label{tab:training_budget}
  \begin{center}
    \begin{small}
      \begin{sc}
        \setlength{\tabcolsep}{3pt}
        \begin{tabular}{lcccccccccc}
          \toprule
          Task & \shortstack{Copy\\First} & sMNIST & pMNIST & \shortstack{sCIFAR\\-10} & ListOps & \shortstack{Path\\finder} & LM & IMDb & KWS & Parity \\
          \midrule
          Max Iter. & 100,000 & 30,000 & 30,000 & 100,000 & 100,000 & 100,000 & 100,000 & 100,000 & 35,000 & 35,000 \\
          \bottomrule
        \end{tabular}
      \end{sc}
    \end{small}
  \end{center}
  \vskip -0.1in
\end{table*}

We use cross-entropy loss for classification tasks and mean-squared error loss for regression tasks.

Regarding the hyperparameters specific to the type of recurrent cells used, no optimization was performed. For the LRU, the initialization of eigenvalues follows the procedure described by \cite{orvieto2023resurrectingrecurrentneuralnetworks}, with $[r_{\min}; r_{\max}] = [0.9; 0.999]$ (which seems to be a good compromise between hyperparameter optimization and an initialization favorable to learning long-range dependencies) and a phase in $[0; 2\pi]$.

For the CMRU and $\alpha$CMRU cells, we remove the particular initialization used in \cite{degeeter2026parallelizablememoryrecurrentunits} which aimed to bias the cell toward a retain mode ($z_t=0$) at the start of training. This trick was necessary to tackle the gradient blocking phenomenon when $z_t = 1$, but is no longer necessary thanks to the introduction of $\varepsilon$ (see \cref{subsection:vanishing}). We do not rule out that a particular initialization could accelerate the convergence of results; we acknowledge that a statistical approach could provide insight into optimal initialization values. The surrogate used in our implementation follows the original implementation $\tfrac{dH(x)}{dx}\stackrel{\text{surr}}{\approx} \tfrac{1}{1 + (\pi \alpha_{\text{surr}} x)^2}$, and we use $\alpha_{\text{surr}} = 1$.

\section{Benchmark Task Descriptions}\label{app:tasks}

This appendix provides detailed descriptions of all benchmark tasks used throughout this work. For each task, we specify the input-output structure, evaluation metric, and dataset characteristics. All tasks are evaluated using the protocol described in \cref{section:protocol}.

\subsection{Sequential Image Classification Tasks}

\subsubsection{Sequential MNIST (sMNIST)}

Sequential MNIST presents handwritten digit images (28$\times$28 pixels) as sequences of 784 grayscale pixel values in raster scan order (left-to-right, top-to-bottom). The task requires classifying the digit (0--9) based solely on the sequential pixel stream.

\paragraph{Task structure.} Each input sequence $\{x_t\}_{t=1}^{784}$ consists of normalized pixel intensities $x_t \in [0,1]$. The model processes the sequence and produces a classification decision at the final timestep. Ground truth labels $y \in \{0,1,\ldots,9\}$ provide supervision only at sequence end. We use last pooling for this task.

\paragraph{Dataset.} We use the standard MNIST training set (60,000 samples) and test set (10,000 samples) \cite{lecun1998gradient}. A validation set is created by withholding 10,000 samples from the training data.

\paragraph{Evaluation metric.} Classification accuracy on the test set.

\subsubsection{Permuted MNIST (pMNIST)}

Permuted MNIST applies a fixed random permutation to the pixel ordering of MNIST images, destroying spatial correlations while preserving information content. Conventionally regarded as more challenging than sMNIST, we challenge this assumption through evaluations presented in \cref{subsection:pmnist_results}.

\paragraph{Task structure.} Given a fixed permutation $\pi: \{1,\ldots,784\} \to \{1,\ldots,784\}$, each image is presented as the sequence $\{x_{\pi(t)}\}_{t=1}^{784}$. The permutation is identical across all samples and remains fixed during training and evaluation.

\paragraph{Variants.} We evaluate three different random permutations: (1) pMNIST; (2) pMNIST$_{\text{24}}$; and (3) pMNIST$_{\text{36}}$.

\paragraph{Dataset and evaluation.} Identical to sMNIST.

\subsubsection{Sequential CIFAR-10 (sCIFAR10)}

Sequential CIFAR-10 extends sequential image classification to natural color images with 10 object categories. Following \cite{orvieto2023resurrectingrecurrentneuralnetworks}, we use the colored version of the task and mean pooling. This task is one of those proposed by \cite{tay2020longrangearenabenchmark} to challenge Transformer architectures.

\paragraph{Task structure.} Color images (32$\times$32$\times$3) are flattened into sequences of length 1,024 by concatenating pixels in raster scan order. Each timestep presents a single pixel value $x_t \in [0,1]^3$ after normalization.

\paragraph{Dataset.} CIFAR-10 training set (50,000 samples) and test set (10,000 samples), with 10\% of training data reserved for validation \cite{krizhevsky2009learning}.

\paragraph{Evaluation metric.} Classification accuracy on the test set.

\subsection{Long-Term Memory Tasks}

All long-term memory tasks use last pooling, where supervision is provided only at the final timestep, jointly assessing memory retention and gradient propagation over the entire sequence length.

\subsubsection{Copy First Input (discrete, 15 items)}

This classification task isolates the ability to encode and retain discrete symbolic information over arbitrary sequence lengths.

\paragraph{Task structure.} At timestep $t=0$, the model observes a one-hot encoded input from a vocabulary of 15 classes. All subsequent timesteps ($t > 0$) present zero vectors. At the final timestep $t=L$, the model must classify the original input.

\paragraph{Sequence lengths.} We evaluate $L \in \{100, 300, 500, 1000, 2000, 5000, 10000\}$.

\paragraph{Dataset.} Generated synthetically with 10,000 training samples, 2,000 validation samples, and 2,000 test samples per sequence length. Class labels are uniformly distributed.

\paragraph{Evaluation metric.} Classification accuracy on the test set. Random-guess baseline is 6.67\%.

\subsubsection{Copy First Input (continuous, no noise)}

This regression task requires memorizing and reproducing a continuous scalar value with no confounding factors.

\paragraph{Task structure.} At timestep $t=0$, the model observes a scalar value $x_0 \sim \mathcal{U}(-1, 1)$. All subsequent timesteps present $x_t = 0$. The model must reproduce $x_0$ at the final timestep $t=L$.

\paragraph{Sequence lengths and dataset.} Identical to the discrete variant.

\paragraph{Evaluation metric.} Mean absolute error (MAE) on the test set. For quantized architectures, we compare against the theoretical quantization limit $\mathcal{E}^*_{\text{MAE}}$ derived in \cref{app:quantized}.

\subsubsection{Copy First Input (continuous, noisy)}

This regression task extends the continuous variant by requiring robust memory retention in the presence of continuous noise.

\paragraph{Task structure.} At timestep $t=0$, the model observes $x_0 \sim \mathcal{U}(-1, 1)$. For all $t > 0$, the model observes uniformly random noise $x_t \sim \mathcal{U}(-1, 1)$. The model must reproduce $x_0$ at timestep $t=L$ despite continuous distractor inputs.

\paragraph{Sequence lengths and dataset.} Identical to previous variants.

\paragraph{Evaluation metric.} Mean absolute error (MAE) on the test set.

\subsubsection{Copy First Input at Large Scale (continuous, noisy)}\label{app:task_copyfirst_large}

This experiment extends the copy-first-input (continuous, noisy) variant to a larger
network to verify that the persistent memory advantage is not limited to the minimal-capacity
regime.

\paragraph{Task structure.} Identical to the copy-first-input (continuous, noisy) task:
a scalar $x_0 \sim \mathcal{U}(-1, 1)$ is presented at $t=0$, followed by uniformly
random noise $x_t \sim \mathcal{U}(-1, 1)$ for all $t > 0$. The model must reproduce
$x_0$ at the final timestep.

\paragraph{Network configuration.} We evaluate $r=6$ recurrent layers with $d=256$. This is substantially larger
than the minimal-capacity experiments in the main text.

\paragraph{Sequence length.} $L=5000$.

\paragraph{Evaluation metric.} Mean absolute error (MAE) on the test set.

\subsection{Natural Language Processing Tasks}

\subsubsection{IMDb Sentiment Classification}

IMDb sentiment classification requires determining whether a movie review expresses positive or negative sentiment. Following the Long Range Arena benchmark \cite{tay2020longrangearenabenchmark}, we use byte-level encoding and mean pooling.

\paragraph{Task structure.} Movie reviews are encoded at the byte level, with each byte mapped to a learned embedding. Reviews vary in length with maximum sequence length capped at 4,000. The model processes the byte sequence with mean pooling to produce a binary classification (positive/negative).

\paragraph{Dataset.} IMDb Large Movie Review Dataset \cite{maas2011imdb} containing 50,000 reviews. We used 40,000 for training, 10,000 for validation, and the rest as the test dataset.

\paragraph{Evaluation metric.} Classification accuracy on the test set. Random-guess baseline is 50\%.

\subsubsection{ListOps}

ListOps evaluates compositional reasoning over hierarchically structured sequences of operations \cite{tay2020longrangearenabenchmark}.

\paragraph{Task structure.} Sequences represent nested operations (MAX, MIN, MEDIAN, SUM\_MOD) applied to integer arguments. Sequences are linearized in prefix notation with special tokens marking brackets and operations. Each token is mapped to a learned embedding, and the model processes sequences with mean pooling to produce the final numerical result.

\paragraph{Dataset.} We use the standard ListOps dataset \cite{nangia2018listopsdiagnosticdatasetlatent} with sequences up to length 2,000. The training set consists of 96,000 samples, while both the validation and test sets contain 2,000 samples each.

\paragraph{Evaluation metric.} Classification accuracy on the discrete output values (10 possible output classes).

\subsubsection{Pathfinder}

Pathfinder is a visual long-range dependency task from the Long Range Arena benchmark \cite{tay2020longrangearenabenchmark}. The task requires determining whether a dashed path connects two circles in a $32 \times 32$ grayscale image.

\paragraph{Task structure.} Images are flattened into sequences of length $L = 1{,}024$ pixels. The model processes the pixel sequence and produces a binary classification (connected / not connected) using mean pooling.

\paragraph{Dataset.} We use the standard LRA Pathfinder split with 160,000 training samples, 20,000 validation samples, and 20,000 test samples.

\paragraph{Evaluation metric.} Classification accuracy on the test set. Random-guess baseline is 50\%.

\subsubsection{Character-Level Language Modeling (Shakespeare)}

Character-level language modeling on the Shakespeare corpus requires predicting the next character given all preceding characters.

\paragraph{Task structure.} The corpus is encoded at the character level. At each timestep, the model observes a character token and must predict the next character. We use the full sequence without pooling, computing the loss at every timestep.

\paragraph{Dataset.} The complete works of Shakespeare, split into training, validation, and test sets by contiguous segments.

\paragraph{Evaluation metric.} Cross-entropy loss on the test set.

\subsection{Audio Classification Tasks}

All audio tasks use the Google Speech Commands dataset \cite{warden2018speechcommandsdatasetlimitedvocabulary}, which contains one-second audio clips of 35 spoken words sampled at 16 kHz. Audio is preprocessed using Mel-frequency cepstral coefficients (MFCCs), producing sequences of 101 timesteps with 13 features per timestep. We rely on \texttt{librosa} for feature extraction \cite{mcfee_2025_15006942}. For all subtasks, we construct balanced datasets with 80\% of samples used for training, 10\% for validation, and 10\% for testing. We use mean pooling.

\subsubsection{Keyword Spotting: Digits (KWS Digits)}

\paragraph{Task structure.} 11-way classification distinguishing spoken digits (``zero'' through ``nine'') and background noise.

\paragraph{Evaluation metric.} Classification accuracy on the test set. Random-guess baseline is 9.09\%.

\subsubsection{Keyword Spotting: Yes versus Others (KWS Yes versus Others)}

\paragraph{Task structure.} Binary classification detecting the word ``yes'' against a set of negative words: ``background noise'', ``no'', ``up'', ``down'', ``left'', ``right''. This represents a wake-word detection scenario.

\paragraph{Evaluation metric.} Classification accuracy on the test set. Random-guess baseline is 50\%.

\subsubsection{Keyword Spotting: All Classes (KWS All)}

\paragraph{Task structure.} 35-way classification across the full Speech Commands vocabulary.

\paragraph{Evaluation metric.} Classification accuracy on the test set. Random-guess baseline is 2.86\%.

\subsection{Modular Arithmetic Tasks}

\subsubsection{Parity}

The parity task requires computing the XOR of all binary inputs in a sequence, equivalent to determining whether the total count of ones is even or odd.

\paragraph{Task structure.} Input sequences $\{x_t\}_{t=1}^{L}$ consist of binary values $x_t \in \{0, 1\}$. The target output $y \in \{0, 1\}$ is computed as $y = \left(\sum_{t=1}^{L} x_t\right) \bmod 2$. We use last pooling, with supervision provided only at the final timestep.

\paragraph{Dataset.} Generated synthetically on-the-fly with binary inputs drawn uniformly at random. Models are trained and validated on sequences of length $L \in [50, 400]$, then evaluated on sequences of length $L \in [50, 1000]$ to assess length generalization.

\paragraph{Evaluation metric.} Classification accuracy on the test set. Random-guess baseline is 50\%. Length generalization is critical for considering the task genuinely solved.

\section{Additional Experimental Results} \label{appendix:additional_results}

This appendix presents supplementary experimental results that extend and support the findings in the main text.

\begin{figure*}[t!]
  \vskip 0in
  \begin{center}
    \includegraphics[width=\textwidth]{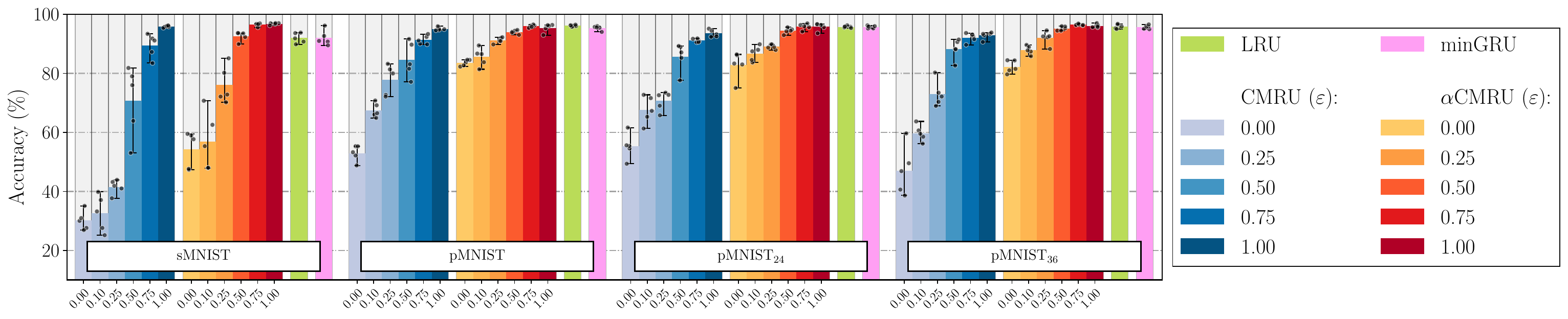}
    \caption{
      Accuracy on the pixel-by-pixel MNIST classification task across different pixel orderings.
      Each architecture is evaluated over five random initializations (shown as scattered points), with results reported as mean across seeds with min--max error bars.
      Cell types are presented from left to right within each task as: CMRU with $\varepsilon \in \{0.00, 0.10, 0.25, 0.50, 0.75, 1.00\}$, then $\alpha$CMRU for the same $\varepsilon$ values, followed by LRU and minGRU.
      Note that $\varepsilon = 0$ corresponds to the original BMRU.
      The four panels show results for: sMNIST (sequential, raster scan order), pMNIST (permutation seed 42), pMNIST$_{24}$ (permutation seed 24), and pMNIST$_{36}$ (permutation seed 36).
      Sequential MNIST proves consistently harder across all architectures, with gradient-robust models ($\varepsilon = 1$) demonstrating the largest performance advantage.
      All models use state dimension $d=32$ and a single recurrent layer. Random-guess accuracy is 10\%.
    }
    \label{figure:pmnist_challenging}
  \end{center}
\end{figure*}

\subsection{Permuted MNIST: Reconsidering the Difficulty Assumption} \label{subsection:pmnist_results}

Permuted MNIST (pMNIST) has been conventionally regarded as a more challenging benchmark than Sequential MNIST (sMNIST). This view is rooted in the observation that the fixed random permutation of pixels destroys spatial correlations, forcing models to discover hidden relationships between pixels rather than exploiting natural image structure~\cite{le2015simplewayinitializerecurrent, hu2019overcomingvanishinggradientproblem}. While this reasoning captures an important aspect of the task complexity, we argue that it presents an overly simplistic view of difficulty in sequence modeling benchmarks when dealing with recurrent architectures.

The conventional perspective conflates two distinct notions of difficulty: the inherent complexity of the solution and the difficulty of discovering that solution during training. A problem may be deemed ``harder'' not because it requires a more sophisticated solution, but because gradient-based optimization struggles to find any viable solution at all. In the context of recurrent architectures susceptible to vanishing gradients, this distinction becomes critical.

Sequential MNIST presents a specific challenge that exacerbates vanishing gradient problems: MNIST digits typically contain substantial whitespace and are centered, resulting in long sequences of near-zero (black) pixels, particularly toward the end of each sequence when presented in raster scan order. These extended uninformative regions create pathways through which gradients must propagate during backpropagation through time. In contrast, permuted MNIST distributes pixels more uniformly throughout the sequence, potentially providing more frequent gradient signals and mitigating the vanishing gradient problem.

To test this hypothesis, we evaluated our models on sMNIST and three different random permutations (pMNIST, pMNIST$_{24}$, and pMNIST$_{36}$, corresponding to random seeds 42, 24, and 36 respectively). The results, presented in \cref{figure:pmnist_challenging}, yield two key observations:

\paragraph{Sequential MNIST is harder than permuted variants.} Across all cell types tested, the sequential version consistently achieves lower accuracy than any of the three permuted versions. Our results suggest that the natural raster scan ordering is more challenging than a typical random permutation, supporting our hypothesis that the distribution of informative content throughout the sequence significantly impacts learnability for gradient-based methods.

\paragraph{Architectures robust to vanishing gradients excel on the harder task.} Both the $\alpha$CMRU and the CMRU with $\varepsilon = 1$ achieve performance equivalent to or better than the LRU and minGRU on the permuted variants. Critically, these gradient-robust architectures demonstrate substantially superior performance on the harder sMNIST task. Furthermore, increasing $\varepsilon$ toward 1 consistently improves performance while dramatically reducing initialization-induced variability across all tasks, in agreement with our theoretical gradient analysis.

\begin{figure*}[t!]
  \vskip 0in
  \begin{center}
    \includegraphics[width=\textwidth]{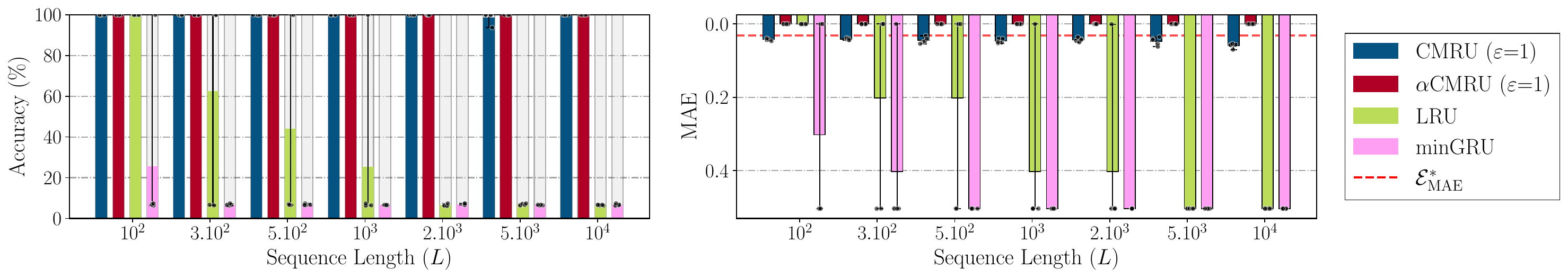}
    \caption{
      Performance on copy first input tasks as a function of sequence length $L$.
      Each architecture is evaluated over five random initializations (shown as scattered points), with results reported as mean across seeds with min--max error bars.
      Cell types are presented from left to right within each sequence length as: CMRU, $\alpha$CMRU (both with $\varepsilon = 1$), LRU, and minGRU.
      All models use hidden state dimension $d=4$ and a single recurrent layer.
      \textbf{Left:} Discrete vocabulary task (15 classes), accuracy for $L \in \{100, 300, 500, 1000, 2000, 5000, 10000\}$. Random-guess accuracy is $1/15 \approx 6.7\%$.
      \textbf{Right:} Continuous regression task (clean, noise-free). MAE with y-axis inverted (higher bars = better). The dashed red line $\mathcal{E}^*_{\text{MAE}}$ indicates the quantization limit for the CMRU at $d=4$; this bound does not apply to LRU, minGRU, or $\alpha$CMRU.
    }
    \label{figure:copy_first_combined}
  \end{center}
\end{figure*}

\subsection{Copy First Input: Discrete and Clean Continuous Variants}

The copy first input task provides a direct assessment of a model's ability to maintain persistent memory over extended sequences. In this task, a single informative signal appears at $t=0$, followed by uninformative inputs (zeros), and the model must reproduce the initial signal at the final timestep after a delay of length $L$.

We evaluate two complementary variants of this task, both presented in \cref{figure:copy_first_combined}:

\paragraph{Copy first input (discrete, 15 items).} This classification task requires the model to memorize a discrete symbol from a vocabulary of 15 classes. With sequence lengths $L \in \{100, 300, 500, 1000, 2000, 5000, 10000\}$, this task evaluates the model's ability to implement robust multi-stable memory states. For the CMRU with state dimension $d=4$, this vocabulary size represents critical capacity.

\paragraph{Copy first input (continuous, clean).} This regression task requires the model to memorize a continuous scalar value $x_0 \sim \mathcal{U}(-1, 1)$ presented at the first timestep and reproduce it at the final timestep. This variant contains only zeros after $t=0$, eliminating confounding effects from noise filtering.

The results reveal stark differences in how architectures handle these memory-intensive tasks. Both the CMRU and $\alpha$CMRU with $\varepsilon = 1$ demonstrate robust performance that remains stable as sequence length increases. In contrast, the LRU and minGRU exhibit substantial performance degradation as sequences lengthen. We argue that this degradation reflects an optimization challenge rather than an inherent architectural limitation: individual seeds that converge successfully achieve near-perfect performance, demonstrating sufficient representational capacity. The difficulty lies in reliably discovering these solutions during training across long sequences where gradients must propagate through the entire sequence length.

\subsection{Copy First Input at Large Scale}\label{app:copyfirst_large_scale}

We evaluate all architectures on the copy-first-input (continuous, noisy) task at a substantially larger scale than the main text experiments: $r=6$ recurrent layers and $d=256$, with sequence length $L=5000$. The task setup is described in \cref{app:task_copyfirst_large}.

CMRU and $\alpha$CMRU solve the task robustly across all five seeds. LRU and minGRU fail \textbf{entirely} across all seeds, yielding MAE near the trivial baseline. This result rules out the possibility that the persistent memory advantage observed at small model sizes is a consequence of insufficient representational capacity in the fading-memory baselines: the architectural distinction between persistent and fading memory determines success or failure on this task.

\subsection{Extended Audio Classification Results}

We present additional audio classification results not included in the main text: performance on the wake-word detection task (KWS Yes versus Others) and results extended to two-layer configurations ($r=2$).

\begin{table}[htbp!]
\caption{
  Accuracy (\%) for KWS Yes versus Others with $r=1$ recurrent layer.
  Results shown as mean with min--max range over multiple initializations.
}
\label{tab:audio_yes_others_r1}
\centering
  \begin{small}
    \begin{sc}
      \setlength{\tabcolsep}{3pt}
      \begin{tabular}{lcccc}
        \toprule
        $(r, d)$ & $\alpha$CMRU & CMRU & LRU & minGRU \\
        \midrule
        (1, 4) & \shortstack{98.12 \\ {\scriptsize [97.34; 98.56]}} & \shortstack{98.12 \\ {\scriptsize [97.59; 98.75]}} & \shortstack{98.18 \\ {\scriptsize [97.47; 98.59]}} & \shortstack{97.89 \\ {\scriptsize [97.03; 98.66]}} \\
        (1, 8) & \shortstack{97.96 \\ {\scriptsize [97.50; 98.19]}} & \shortstack{98.25 \\ {\scriptsize [97.84; 98.66]}} & \shortstack{98.41 \\ {\scriptsize [97.81; 99.00]}} & \shortstack{98.34 \\ {\scriptsize [97.88; 99.12]}} \\
        (1, 16) & \shortstack{97.61 \\ {\scriptsize [96.97; 98.03]}} & \shortstack{98.30 \\ {\scriptsize [97.88; 98.62]}} & \shortstack{98.45 \\ {\scriptsize [98.22; 99.06]}} & \shortstack{98.26 \\ {\scriptsize [97.56; 99.00]}} \\
        \bottomrule
      \end{tabular}
    \end{sc}
  \end{small}
  \vskip -0.1in
\end{table}

On KWS Yes versus Others, all architectures achieve approximately 98\% accuracy even at $d=4$, with differences falling within cross-seed variability. This confirms that single-layer CMRU architectures are sufficient for wake-word detection, achieving 98.30\% with $d=16$ while maintaining compatibility with 100~nW analog implementations~\cite{Fyon2026AnalogRecurrent}.

\begin{table}[htbp!]
\caption{
  Accuracy (\%) for audio classification tasks (KWS Yes versus Others, KWS Digits, KWS All) with $r=2$ recurrent layers.
  Results shown as mean with min--max range over multiple initializations.
}
\label{tab:audio_tasks_r2}
  \begin{center}
    \begin{small}
      \begin{sc}
        \setlength{\tabcolsep}{3pt}
        \begin{tabular}{lcccc}
          \toprule
          $(r, d)$ & $\alpha$CMRU & CMRU & LRU & minGRU \\
          \midrule
          \multicolumn{5}{c}{\textbf{KWS Yes versus Others}} \\
          \midrule
          (2, 4) & \shortstack{97.78 \\ {\scriptsize [96.53; 98.34]}} & \shortstack{98.45 \\ {\scriptsize [97.53; 98.91]}} & \shortstack{98.37 \\ {\scriptsize [97.62; 99.00]}} & \shortstack{98.16 \\ {\scriptsize [97.31; 98.72]}} \\
          (2, 8) & \shortstack{98.28 \\ {\scriptsize [97.88; 98.66]}} & \shortstack{98.38 \\ {\scriptsize [97.84; 98.97]}} & \shortstack{98.27 \\ {\scriptsize [97.47; 98.81]}} & \shortstack{98.49 \\ {\scriptsize [97.66; 98.91]}} \\
          (2, 16) & \shortstack{97.67 \\ {\scriptsize [96.81; 98.44]}} & \shortstack{98.35 \\ {\scriptsize [97.72; 98.75]}} & \shortstack{98.65 \\ {\scriptsize [98.19; 99.06]}} & \shortstack{98.40 \\ {\scriptsize [97.66; 99.03]}} \\
          \midrule
          \multicolumn{5}{c}{\textbf{KWS Digits}} \\
          \midrule
          (2, 4) & \shortstack{95.44 \\ {\scriptsize [94.81; 95.97]}} & \shortstack{95.43 \\ {\scriptsize [95.16; 95.72]}} & \shortstack{95.94 \\ {\scriptsize [95.00; 96.91]}} & \shortstack{96.06 \\ {\scriptsize [95.53; 96.44]}} \\
          (2, 8) & \shortstack{95.94 \\ {\scriptsize [95.69; 96.31]}} & \shortstack{95.34 \\ {\scriptsize [94.91; 95.69]}} & \shortstack{96.12 \\ {\scriptsize [95.34; 96.53]}} & \shortstack{96.19 \\ {\scriptsize [95.88; 96.53]}} \\
          (2, 16) & \shortstack{95.79 \\ {\scriptsize [94.97; 96.91]}} & \shortstack{95.82 \\ {\scriptsize [95.44; 96.16]}} & \shortstack{96.43 \\ {\scriptsize [95.91; 96.84]}} & \shortstack{96.34 \\ {\scriptsize [96.03; 96.59]}} \\
          \midrule
          \multicolumn{5}{c}{\textbf{KWS All}} \\
          \midrule
          (2, 4) & \shortstack{89.97 \\ {\scriptsize [88.75; 90.94]}} & \shortstack{89.38 \\ {\scriptsize [87.91; 90.50]}} & \shortstack{90.38 \\ {\scriptsize [87.31; 91.94]}} & \shortstack{91.38 \\ {\scriptsize [89.97; 91.81]}} \\
          (2, 8) & \shortstack{89.76 \\ {\scriptsize [86.47; 91.34]}} & \shortstack{90.09 \\ {\scriptsize [89.28; 90.53]}} & \shortstack{91.33 \\ {\scriptsize [89.66; 92.56]}} & \shortstack{91.65 \\ {\scriptsize [90.41; 92.31]}} \\
          (2, 16) & \shortstack{90.01 \\ {\scriptsize [88.09; 90.75]}} & \shortstack{90.14 \\ {\scriptsize [89.00; 90.84]}} & \shortstack{92.03 \\ {\scriptsize [91.59; 92.53]}} & \shortstack{91.86 \\ {\scriptsize [90.84; 92.59]}} \\
          \bottomrule
        \end{tabular}
      \end{sc}
    \end{small}
  \end{center}
  \vskip -0.1in
\end{table}

\paragraph{Comparison with single-layer results.} Comparing \cref{tab:audio_tasks_r2} with \cref{tab:audio_tasks_r1} reveals modest improvements from adding a second recurrent layer. On KWS Yes versus Others, all architectures achieve approximately 98\% accuracy with both $r=1$ and $r=2$, indicating that a single layer suffices for this binary detection task. On KWS Digits and KWS All, the additional layer provides marginal gains (typically 0.3--0.5\%).

\paragraph{Implications for deployment.} These results support the use of single-layer CMRU architectures for ultra-low power keyword spotting applications. The marginal accuracy gain from $r=2$ (typically $<0.5\%$) does not justify the increased circuit complexity and power consumption for analog implementations. A single-layer CMRU with $d=16$ achieves 98.30\% on wake-word detection, 95.38\% on digit recognition, and 89.76\% on full vocabulary classification.

\subsection{Extended Parity Task Analysis}

\cref{tab:Parity_alphaCMRU} presents complete results for the Parity task across all tested $\varepsilon$ values for both CMRU and $\alpha$CMRU variants. The results confirm that reflection dynamics ($\varepsilon = -1$) are necessary and sufficient for solving this task, while all other configurations fail to exceed chance performance. This finding directly supports the theoretical analysis connecting negative eigenvalues to state-tracking capabilities discussed in \cref{subsec:parity}.

\begin{table*}[t!]
\caption{
  Performance on the Parity task across different model types and $\varepsilon$ values.
  Accuracy (\%) on sequences of length $L \in [50, 1000]$ after training on $L \in [50, 400]$.
  Results shown as mean across five seeds with min--max range.
  \textbf{Top:} CMRU with $\varepsilon \in \{-1.00, 0.00, 0.10, 0.25, 0.50, 0.75, 1.00\}$ and minGRU. Note $\varepsilon = 0$ is the original BMRU.
  \textbf{Bottom:} $\alpha$CMRU with the same $\varepsilon$ values and LRU.
  Both CMRU and $\alpha$CMRU achieve 100\% with $\varepsilon = -1.00$; all other configurations perform at chance level (50\%).
}
\label{tab:Parity_alphaCMRU}
  \begin{center}
    \begin{small}
      \begin{sc}
        \setlength{\tabcolsep}{3pt}
        \begin{tabular}{lcccccccc}
          \toprule
          Model & \textbf{CMRU} & CMRU & CMRU & CMRU & CMRU & CMRU & CMRU & minGRU \\
          $\varepsilon$ & \textbf{-1.00} & 0.00 & 0.10 & 0.25 & 0.50 & 0.75 & 1.00 & --- \\
          \midrule
          Acc. (\%) & \textbf{\shortstack{100.00 \\ {\scriptsize [100.00; 100.00]}}} & \shortstack{50.01 \\ {\scriptsize [49.97; 50.06]}} & \shortstack{49.97 \\ {\scriptsize [49.34; 50.53]}} & \shortstack{49.89 \\ {\scriptsize [49.62; 50.09]}} & \shortstack{49.81 \\ {\scriptsize [49.53; 49.97]}} & \shortstack{49.95 \\ {\scriptsize [49.50; 50.47]}} & \shortstack{50.03 \\ {\scriptsize [49.47; 50.53]}} & \shortstack{49.95 \\ {\scriptsize [49.78; 50.03]}} \\
          \bottomrule
          \toprule
          Model & \textbf{$\boldsymbol{\alpha}$CMRU} & $\alpha$CMRU & $\alpha$CMRU & $\alpha$CMRU & $\alpha$CMRU & $\alpha$CMRU & $\alpha$CMRU & LRU \\
          $\varepsilon$ & \textbf{-1.00} & 0.00 & 0.10 & 0.25 & 0.50 & 0.75 & 1.00 & --- \\
          \midrule
          Acc. (\%) & \textbf{\shortstack{100.00 \\ {\scriptsize [100.00; 100.00]}}} & \shortstack{49.82 \\ {\scriptsize [49.47; 50.03]}} & \shortstack{49.81 \\ {\scriptsize [49.31; 50.06]}} & \shortstack{50.10 \\ {\scriptsize [49.38; 50.81]}} & \shortstack{49.76 \\ {\scriptsize [49.12; 50.06]}} & \shortstack{49.81 \\ {\scriptsize [49.31; 50.00]}} & \shortstack{49.82 \\ {\scriptsize [49.28; 50.34]}} & \shortstack{49.88 \\ {\scriptsize [49.31; 50.53]}} \\
          \bottomrule
        \end{tabular}
      \end{sc}
    \end{small}
  \end{center}
  \vskip -0.1in
\end{table*}

\clearpage

\section{Tabulated Results for All Figures}\label{appendix:tabulated_result}

This appendix reports results already presented elsewhere in the article in the form of figures. For the sake of scientific transparency, we provide below a summary of the quantitative results in table form, where each metric is reported by its average and the min--max interval observed over the five seeds tested. The raw data and codes are publicly available and contain more information, such as the evolution of metrics during training and the results on the validation datasets (those in the paper are on the held-out test dataset).

\begin{table}[htbp!]
  \caption{Accuracy (\%) for sMNIST with $d=32$ across different epsilon values. Note that $\varepsilon = 0$ corresponds to the original BMRU. \cref{figure:accuracy_smnist_combined}, left panel.}
  \label{tab:sMNIST_h32_epsilon}
  \begin{center}
    \begin{small}
      \begin{sc}
        \begin{tabular}{llc}
          \toprule
          Model Type & Epsilon & Accuracy (\%) \\
          \midrule
          CMRU & 0.00 & \shortstack{30.13 \\ {\scriptsize [26.94; 35.09]}} \\
          CMRU & 0.10 & \shortstack{32.59 \\ {\scriptsize [25.19; 39.84]}} \\
          CMRU & 0.25 & \shortstack{41.54 \\ {\scriptsize [37.69; 43.91]}} \\
          CMRU & 0.50 & \shortstack{70.75 \\ {\scriptsize [53.03; 81.81]}} \\
          CMRU & 0.75 & \shortstack{89.40 \\ {\scriptsize [83.47; 93.38]}} \\
          CMRU & 1.00 & \shortstack{95.92 \\ {\scriptsize [95.41; 96.19]}} \\
          $\alpha$CMRU & 0.00 & \shortstack{54.22 \\ {\scriptsize [47.31; 59.50]}} \\
          $\alpha$CMRU & 0.10 & \shortstack{56.92 \\ {\scriptsize [47.94; 70.72]}} \\
          $\alpha$CMRU & 0.25 & \shortstack{76.08 \\ {\scriptsize [70.19; 85.09]}} \\
          $\alpha$CMRU & 0.50 & \shortstack{92.59 \\ {\scriptsize [89.91; 93.66]}} \\
          $\alpha$CMRU & 0.75 & \shortstack{96.45 \\ {\scriptsize [95.56; 96.91]}} \\
          $\alpha$CMRU & 1.00 & \shortstack{96.76 \\ {\scriptsize [96.22; 97.00]}} \\
          \bottomrule
        \end{tabular}
      \end{sc}
    \end{small}
  \end{center}
  \vskip -0.1in
\end{table}

\begin{table}[htbp!]
  \caption{Accuracy (\%) for sMNIST with $\varepsilon=1.0$ across different state dimensions. \cref{figure:accuracy_smnist_combined}, right panel.}
  \label{tab:sMNIST_eps1_state_dim}
  \begin{center}
    \begin{small}
      \begin{sc}
        \setlength{\tabcolsep}{3pt}
        \begin{tabular}{lllc}
          \toprule
          Model Type & State Dim & Epsilon & Accuracy (\%) \\
          \midrule
          $\alpha$CMRU & 4 & 1.00 & \shortstack{88.34 \\ {\scriptsize [87.38; 89.53]}} \\
          $\alpha$CMRU & 8 & 1.00 & \shortstack{93.49 \\ {\scriptsize [91.56; 94.62]}} \\
          $\alpha$CMRU & 16 & 1.00 & \shortstack{95.43 \\ {\scriptsize [93.03; 96.47]}} \\
          $\alpha$CMRU & 32 & 1.00 & \shortstack{96.76 \\ {\scriptsize [96.22; 97.00]}} \\
          $\alpha$CMRU & 64 & 1.00 & \shortstack{96.88 \\ {\scriptsize [96.66; 97.12]}} \\
          $\alpha$CMRU & 128 & 1.00 & \shortstack{97.16 \\ {\scriptsize [96.72; 97.91]}} \\
          $\alpha$CMRU & 256 & 1.00 & \shortstack{97.25 \\ {\scriptsize [96.69; 98.06]}} \\
          CMRU & 4 & 1.00 & \shortstack{83.61 \\ {\scriptsize [80.97; 85.09]}} \\
          CMRU & 8 & 1.00 & \shortstack{91.49 \\ {\scriptsize [90.53; 92.69]}} \\
          CMRU & 16 & 1.00 & \shortstack{94.83 \\ {\scriptsize [93.12; 95.84]}} \\
          CMRU & 32 & 1.00 & \shortstack{95.92 \\ {\scriptsize [95.41; 96.19]}} \\
          CMRU & 64 & 1.00 & \shortstack{97.05 \\ {\scriptsize [96.69; 97.28]}} \\
          CMRU & 128 & 1.00 & \shortstack{96.20 \\ {\scriptsize [95.16; 97.00]}} \\
          CMRU & 256 & 1.00 & \shortstack{96.16 \\ {\scriptsize [95.81; 96.38]}} \\
          LRU & 4 & --- & \shortstack{72.88 \\ {\scriptsize [53.87; 83.22]}} \\
          LRU & 8 & --- & \shortstack{78.08 \\ {\scriptsize [62.72; 88.34]}} \\
          LRU & 16 & --- & \shortstack{88.98 \\ {\scriptsize [82.78; 91.53]}} \\
          LRU & 32 & --- & \shortstack{91.97 \\ {\scriptsize [89.84; 93.78]}} \\
          LRU & 64 & --- & \shortstack{92.59 \\ {\scriptsize [90.41; 93.84]}} \\
          LRU & 128 & --- & \shortstack{94.08 \\ {\scriptsize [91.66; 96.06]}} \\
          LRU & 256 & --- & \shortstack{95.61 \\ {\scriptsize [94.69; 96.31]}} \\
          minGRU & 4 & --- & \shortstack{75.88 \\ {\scriptsize [66.16; 86.09]}} \\
          minGRU & 8 & --- & \shortstack{81.88 \\ {\scriptsize [73.28; 88.69]}} \\
          minGRU & 16 & --- & \shortstack{92.17 \\ {\scriptsize [90.34; 93.28]}} \\
          minGRU & 32 & --- & \shortstack{92.01 \\ {\scriptsize [89.50; 96.16]}} \\
          minGRU & 64 & --- & \shortstack{93.39 \\ {\scriptsize [88.50; 96.47]}} \\
          minGRU & 128 & --- & \shortstack{93.88 \\ {\scriptsize [91.12; 96.00]}} \\
          minGRU & 256 & --- & \shortstack{96.62 \\ {\scriptsize [94.62; 97.62]}} \\
          \bottomrule
        \end{tabular}
      \end{sc}
    \end{small}
  \end{center}
  \vskip -0.1in
\end{table}

\begin{table*}[t!]
  \caption{MAE for Copy First Continuous (noisy) tasks across architectures ($d$, $r$) and sequence lengths (Part 1 of 3). \cref{figure:mae_copy_first_combined}.}
  \label{tab:copy_first_continuous_noisy_by_arch_part1}
  \begin{center}
    \begin{small}
      \begin{sc}
        \begin{tabular}{llllc}
          \toprule
          Model Type & Seq Length & State Dim & Num Recs & MAE \\
          \midrule
          $\alpha$CMRU & 100 & 4 & 1 & \shortstack{0.0002 \\ {\scriptsize [0.0002; 0.0002]}} \\
          $\alpha$CMRU & 300 & 4 & 1 & \shortstack{0.0002 \\ {\scriptsize [0.0001; 0.0002]}} \\
          $\alpha$CMRU & 500 & 4 & 1 & \shortstack{0.0002 \\ {\scriptsize [0.0002; 0.0002]}} \\
          $\alpha$CMRU & 1000 & 4 & 1 & \shortstack{0.0002 \\ {\scriptsize [0.0002; 0.0003]}} \\
          $\alpha$CMRU & 2000 & 4 & 1 & \shortstack{0.0002 \\ {\scriptsize [0.0002; 0.0002]}} \\
          $\alpha$CMRU & 5000 & 4 & 1 & \shortstack{0.0005 \\ {\scriptsize [0.0002; 0.0014]}} \\
          $\alpha$CMRU & 10000 & 4 & 1 & \shortstack{0.1009 \\ {\scriptsize [0.0002; 0.5038]}} \\
          CMRU & 100 & 4 & 1 & \shortstack{0.0401 \\ {\scriptsize [0.0309; 0.0466]}} \\
          CMRU & 300 & 4 & 1 & \shortstack{0.0404 \\ {\scriptsize [0.0342; 0.0457]}} \\
          CMRU & 500 & 4 & 1 & \shortstack{0.0383 \\ {\scriptsize [0.0362; 0.0399]}} \\
          CMRU & 1000 & 4 & 1 & \shortstack{0.0402 \\ {\scriptsize [0.0362; 0.0432]}} \\
          CMRU & 2000 & 4 & 1 & \shortstack{0.1743 \\ {\scriptsize [0.0378; 0.5037]}} \\
          CMRU & 5000 & 4 & 1 & \shortstack{0.1538 \\ {\scriptsize [0.0363; 0.5038]}} \\
          CMRU & 10000 & 4 & 1 & \shortstack{0.1357 \\ {\scriptsize [0.0352; 0.5037]}} \\
          LRU & 100 & 4 & 1 & \shortstack{0.1011 \\ {\scriptsize [0.0002; 0.5038]}} \\
          LRU & 300 & 4 & 1 & \shortstack{0.3025 \\ {\scriptsize [0.0005; 0.5038]}} \\
          LRU & 500 & 4 & 1 & \shortstack{0.4031 \\ {\scriptsize [0.0004; 0.5040]}} \\
          LRU & 1000 & 4 & 1 & \shortstack{0.4033 \\ {\scriptsize [0.0012; 0.5039]}} \\
          LRU & 2000 & 4 & 1 & \shortstack{0.5039 \\ {\scriptsize [0.5038; 0.5044]}} \\
          LRU & 5000 & 4 & 1 & \shortstack{0.5039 \\ {\scriptsize [0.5037; 0.5042]}} \\
          LRU & 10000 & 4 & 1 & \shortstack{0.5038 \\ {\scriptsize [0.5037; 0.5038]}} \\
          minGRU & 100 & 4 & 1 & \shortstack{0.4031 \\ {\scriptsize [0.0002; 0.5039]}} \\
          minGRU & 300 & 4 & 1 & \shortstack{0.5038 \\ {\scriptsize [0.5038; 0.5039]}} \\
          minGRU & 500 & 4 & 1 & \shortstack{0.5038 \\ {\scriptsize [0.5038; 0.5038]}} \\
          minGRU & 1000 & 4 & 1 & \shortstack{0.5038 \\ {\scriptsize [0.5038; 0.5039]}} \\
          minGRU & 2000 & 4 & 1 & \shortstack{0.5039 \\ {\scriptsize [0.5038; 0.5042]}} \\
          minGRU & 5000 & 4 & 1 & \shortstack{0.5038 \\ {\scriptsize [0.5037; 0.5041]}} \\
          minGRU & 10000 & 4 & 1 & \shortstack{0.5037 \\ {\scriptsize [0.5036; 0.5038]}} \\
          $\alpha$CMRU & 100 & 16 & 1 & \shortstack{0.0002 \\ {\scriptsize [0.0002; 0.0002]}} \\
          $\alpha$CMRU & 300 & 16 & 1 & \shortstack{0.0002 \\ {\scriptsize [0.0002; 0.0002]}} \\
          $\alpha$CMRU & 500 & 16 & 1 & \shortstack{0.0002 \\ {\scriptsize [0.0002; 0.0002]}} \\
          $\alpha$CMRU & 1000 & 16 & 1 & \shortstack{0.0002 \\ {\scriptsize [0.0002; 0.0002]}} \\
          \bottomrule
        \end{tabular}
      \end{sc}
    \end{small}
  \end{center}
  \vskip -0.1in
\end{table*}

\begin{table*}[t!]
  \caption{MAE for Copy First Continuous (noisy) tasks across architectures ($d$, $r$) and sequence lengths (Part 2 of 3). \cref{figure:mae_copy_first_combined}.}
  \label{tab:copy_first_continuous_noisy_by_arch_part2}
  \begin{center}
    \begin{small}
      \begin{sc}
        \begin{tabular}{llllc}
          \toprule
          Model Type & Seq Length & State Dim & Num Recs & MAE \\
          \midrule
          $\alpha$CMRU & 2000 & 16 & 1 & \shortstack{0.0002 \\ {\scriptsize [0.0001; 0.0002]}} \\
          $\alpha$CMRU & 5000 & 16 & 1 & \shortstack{0.0002 \\ {\scriptsize [0.0002; 0.0003]}} \\
          $\alpha$CMRU & 10000 & 16 & 1 & \shortstack{0.1010 \\ {\scriptsize [0.0002; 0.5037]}} \\
          CMRU & 100 & 16 & 1 & \shortstack{0.0119 \\ {\scriptsize [0.0099; 0.0143]}} \\
          CMRU & 300 & 16 & 1 & \shortstack{0.0143 \\ {\scriptsize [0.0112; 0.0163]}} \\
          CMRU & 500 & 16 & 1 & \shortstack{0.0147 \\ {\scriptsize [0.0139; 0.0155]}} \\
          CMRU & 1000 & 16 & 1 & \shortstack{0.0148 \\ {\scriptsize [0.0128; 0.0159]}} \\
          CMRU & 2000 & 16 & 1 & \shortstack{0.0154 \\ {\scriptsize [0.0122; 0.0186]}} \\
          CMRU & 5000 & 16 & 1 & \shortstack{0.1142 \\ {\scriptsize [0.0168; 0.5038]}} \\
          CMRU & 10000 & 16 & 1 & \shortstack{0.1142 \\ {\scriptsize [0.0168; 0.5038]}} \\
          LRU & 100 & 16 & 1 & \shortstack{0.0002 \\ {\scriptsize [0.0002; 0.0003]}} \\
          LRU & 300 & 16 & 1 & \shortstack{0.0004 \\ {\scriptsize [0.0002; 0.0006]}} \\
          LRU & 500 & 16 & 1 & \shortstack{0.1012 \\ {\scriptsize [0.0003; 0.5038]}} \\
          LRU & 1000 & 16 & 1 & \shortstack{0.2021 \\ {\scriptsize [0.0003; 0.5039]}} \\
          LRU & 2000 & 16 & 1 & \shortstack{0.5039 \\ {\scriptsize [0.5038; 0.5041]}} \\
          LRU & 5000 & 16 & 1 & \shortstack{0.5039 \\ {\scriptsize [0.5037; 0.5046]}} \\
          LRU & 10000 & 16 & 1 & \shortstack{0.5037 \\ {\scriptsize [0.5036; 0.5038]}} \\
          minGRU & 100 & 16 & 1 & \shortstack{0.3023 \\ {\scriptsize [0.0002; 0.5038]}} \\
          minGRU & 300 & 16 & 1 & \shortstack{0.4030 \\ {\scriptsize [0.0002; 0.5038]}} \\
          minGRU & 500 & 16 & 1 & \shortstack{0.5038 \\ {\scriptsize [0.5038; 0.5038]}} \\
          minGRU & 1000 & 16 & 1 & \shortstack{0.5038 \\ {\scriptsize [0.5037; 0.5039]}} \\
          minGRU & 2000 & 16 & 1 & \shortstack{0.5039 \\ {\scriptsize [0.5038; 0.5042]}} \\
          minGRU & 5000 & 16 & 1 & \shortstack{0.5039 \\ {\scriptsize [0.5038; 0.5043]}} \\
          minGRU & 10000 & 16 & 1 & \shortstack{0.5038 \\ {\scriptsize [0.5037; 0.5038]}} \\
          $\alpha$CMRU & 100 & 16 & 4 & \shortstack{0.0001 \\ {\scriptsize [0.0001; 0.0002]}} \\
          $\alpha$CMRU & 300 & 16 & 4 & \shortstack{0.0001 \\ {\scriptsize [0.0001; 0.0002]}} \\
          $\alpha$CMRU & 500 & 16 & 4 & \shortstack{0.0001 \\ {\scriptsize [0.0001; 0.0002]}} \\
          $\alpha$CMRU & 1000 & 16 & 4 & \shortstack{0.0001 \\ {\scriptsize [0.0001; 0.0002]}} \\
          $\alpha$CMRU & 2000 & 16 & 4 & \shortstack{0.0001 \\ {\scriptsize [0.0000; 0.0001]}} \\
          $\alpha$CMRU & 5000 & 16 & 4 & \shortstack{0.0001 \\ {\scriptsize [0.0001; 0.0002]}} \\
          $\alpha$CMRU & 10000 & 16 & 4 & \shortstack{0.0004 \\ {\scriptsize [0.0001; 0.0014]}} \\
          CMRU & 100 & 16 & 4 & \shortstack{0.0019 \\ {\scriptsize [0.0014; 0.0030]}} \\
          \bottomrule
        \end{tabular}
      \end{sc}
    \end{small}
  \end{center}
  \vskip -0.1in
\end{table*}

\begin{table*}[t!]
  \caption{MAE for Copy First Continuous (noisy) tasks across architectures ($d$, $r$) and sequence lengths (Part 3 of 3). \cref{figure:mae_copy_first_combined}.}
  \label{tab:copy_first_continuous_noisy_by_arch_part3}
  \begin{center}
    \begin{small}
      \begin{sc}
        \begin{tabular}{llllc}
          \toprule
          Model Type & Seq Length & State Dim & Num Recs & MAE \\
          \midrule
          CMRU & 300 & 16 & 4 & \shortstack{0.0036 \\ {\scriptsize [0.0021; 0.0048]}} \\
          CMRU & 500 & 16 & 4 & \shortstack{0.0034 \\ {\scriptsize [0.0020; 0.0050]}} \\
          CMRU & 1000 & 16 & 4 & \shortstack{0.0032 \\ {\scriptsize [0.0023; 0.0049]}} \\
          CMRU & 2000 & 16 & 4 & \shortstack{0.0034 \\ {\scriptsize [0.0026; 0.0040]}} \\
          CMRU & 5000 & 16 & 4 & \shortstack{0.0049 \\ {\scriptsize [0.0027; 0.0128]}} \\
          CMRU & 10000 & 16 & 4 & \shortstack{0.0066 \\ {\scriptsize [0.0033; 0.0134]}} \\
          LRU & 100 & 16 & 4 & \shortstack{0.0002 \\ {\scriptsize [0.0002; 0.0003]}} \\
          LRU & 300 & 16 & 4 & \shortstack{0.0003 \\ {\scriptsize [0.0001; 0.0005]}} \\
          LRU & 500 & 16 & 4 & \shortstack{0.0006 \\ {\scriptsize [0.0002; 0.0011]}} \\
          LRU & 1000 & 16 & 4 & \shortstack{0.1022 \\ {\scriptsize [0.0003; 0.5038]}} \\
          LRU & 2000 & 16 & 4 & \shortstack{0.4038 \\ {\scriptsize [0.0038; 0.5038]}} \\
          LRU & 5000 & 16 & 4 & \shortstack{0.5038 \\ {\scriptsize [0.5038; 0.5040]}} \\
          LRU & 10000 & 16 & 4 & \shortstack{0.5038 \\ {\scriptsize [0.5037; 0.5038]}} \\
          minGRU & 100 & 16 & 4 & \shortstack{0.1012 \\ {\scriptsize [0.0001; 0.5038]}} \\
          minGRU & 300 & 16 & 4 & \shortstack{0.1013 \\ {\scriptsize [0.0003; 0.5038]}} \\
          minGRU & 500 & 16 & 4 & \shortstack{0.2021 \\ {\scriptsize [0.0002; 0.5038]}} \\
          minGRU & 1000 & 16 & 4 & \shortstack{0.5038 \\ {\scriptsize [0.5038; 0.5038]}} \\
          minGRU & 2000 & 16 & 4 & \shortstack{0.5039 \\ {\scriptsize [0.5038; 0.5046]}} \\
          minGRU & 5000 & 16 & 4 & \shortstack{0.5039 \\ {\scriptsize [0.5038; 0.5044]}} \\
          minGRU & 10000 & 16 & 4 & \shortstack{0.5039 \\ {\scriptsize [0.5038; 0.5043]}} \\
          \bottomrule
        \end{tabular}
      \end{sc}
    \end{small}
  \end{center}
  \vskip -0.1in
\end{table*}

\begin{table}[htbp!]
  \caption{Accuracy (\%) comparison across sequential MNIST permutations for $d=32$ with different epsilon values (Part 1 of 2). Note that $\varepsilon = 0$ corresponds to the original BMRU. \cref{figure:pmnist_challenging}.}
  \label{tab:epsilon_comparison_h32_part1}
  \begin{center}
    \begin{small}
      \begin{sc}
        \setlength{\tabcolsep}{3pt}
        \begin{tabular}{lllc}
          \toprule
          Task Name & Model Type & Epsilon & Accuracy (\%) \\
          \midrule
          sMNIST & CMRU & 0.00 & \shortstack{30.13 \\ {\scriptsize [26.94; 35.09]}} \\
          sMNIST & CMRU & 0.10 & \shortstack{32.59 \\ {\scriptsize [25.19; 39.84]}} \\
          sMNIST & CMRU & 0.25 & \shortstack{41.54 \\ {\scriptsize [37.69; 43.91]}} \\
          sMNIST & CMRU & 0.50 & \shortstack{70.75 \\ {\scriptsize [53.03; 81.81]}} \\
          sMNIST & CMRU & 0.75 & \shortstack{89.40 \\ {\scriptsize [83.47; 93.38]}} \\
          sMNIST & CMRU & 1.00 & \shortstack{95.92 \\ {\scriptsize [95.41; 96.19]}} \\
          sMNIST & $\alpha$CMRU & 0.00 & \shortstack{54.22 \\ {\scriptsize [47.31; 59.50]}} \\
          sMNIST & $\alpha$CMRU & 0.10 & \shortstack{56.92 \\ {\scriptsize [47.94; 70.72]}} \\
          sMNIST & $\alpha$CMRU & 0.25 & \shortstack{76.08 \\ {\scriptsize [70.19; 85.09]}} \\
          sMNIST & $\alpha$CMRU & 0.50 & \shortstack{92.59 \\ {\scriptsize [89.91; 93.66]}} \\
          sMNIST & $\alpha$CMRU & 0.75 & \shortstack{96.45 \\ {\scriptsize [95.56; 96.91]}} \\
          sMNIST & $\alpha$CMRU & 1.00 & \shortstack{96.76 \\ {\scriptsize [96.22; 97.00]}} \\
          sMNIST & LRU & --- & \shortstack{91.97 \\ {\scriptsize [89.84; 93.78]}} \\
          sMNIST & minGRU & --- & \shortstack{92.01 \\ {\scriptsize [89.50; 96.16]}} \\
          pMNIST & CMRU & 0.00 & \shortstack{52.98 \\ {\scriptsize [48.81; 55.31]}} \\
          pMNIST & CMRU & 0.10 & \shortstack{67.49 \\ {\scriptsize [64.91; 70.78]}} \\
          pMNIST & CMRU & 0.25 & \shortstack{77.87 \\ {\scriptsize [72.12; 83.22]}} \\
          pMNIST & CMRU & 0.50 & \shortstack{84.62 \\ {\scriptsize [77.12; 91.62]}} \\
          pMNIST & CMRU & 0.75 & \shortstack{91.32 \\ {\scriptsize [89.81; 93.31]}} \\
          pMNIST & CMRU & 1.00 & \shortstack{95.06 \\ {\scriptsize [94.56; 95.94]}} \\
          pMNIST & $\alpha$CMRU & 0.00 & \shortstack{83.58 \\ {\scriptsize [82.28; 84.59]}} \\
          pMNIST & $\alpha$CMRU & 0.10 & \shortstack{85.56 \\ {\scriptsize [81.38; 89.44]}} \\
          pMNIST & $\alpha$CMRU & 0.25 & \shortstack{91.19 \\ {\scriptsize [89.78; 92.22]}} \\
          pMNIST & $\alpha$CMRU & 0.50 & \shortstack{93.99 \\ {\scriptsize [93.03; 94.69]}} \\
          pMNIST & $\alpha$CMRU & 0.75 & \shortstack{95.99 \\ {\scriptsize [95.38; 96.53]}} \\
          pMNIST & $\alpha$CMRU & 1.00 & \shortstack{95.39 \\ {\scriptsize [92.97; 96.38]}} \\
          pMNIST & LRU & --- & \shortstack{96.12 \\ {\scriptsize [95.75; 96.47]}} \\
          pMNIST & minGRU & --- & \shortstack{95.35 \\ {\scriptsize [94.03; 96.00]}} \\
          $\text{pMNIST}_{24}$ & CMRU & 0.00 & \shortstack{55.31 \\ {\scriptsize [49.38; 61.62]}} \\
          $\text{pMNIST}_{24}$ & CMRU & 0.10 & \shortstack{67.64 \\ {\scriptsize [61.31; 72.72]}} \\
          $\text{pMNIST}_{24}$ & CMRU & 0.25 & \shortstack{70.71 \\ {\scriptsize [65.78; 73.44]}} \\
          $\text{pMNIST}_{24}$ & CMRU & 0.50 & \shortstack{85.63 \\ {\scriptsize [77.62; 89.25]}} \\
          \bottomrule
        \end{tabular}
      \end{sc}
    \end{small}
  \end{center}
  \vskip -0.1in
\end{table}

\begin{table}[htbp!]
  \caption{Accuracy (\%) comparison across sequential MNIST permutations for $d=32$ with different epsilon values (Part 2 of 2). Note that $\varepsilon = 0$ corresponds to the original BMRU. \cref{figure:pmnist_challenging}.}
  \label{tab:epsilon_comparison_h32_part2}
  \begin{center}
    \begin{small}
      \begin{sc}
        \setlength{\tabcolsep}{3pt}
        \begin{tabular}{lllc}
          \toprule
          Task Name & Model Type & Epsilon & Accuracy (\%) \\
          \midrule
          $\text{pMNIST}_{24}$ & CMRU & 0.75 & \shortstack{91.09 \\ {\scriptsize [90.50; 91.81]}} \\
          $\text{pMNIST}_{24}$ & CMRU & 1.00 & \shortstack{93.52 \\ {\scriptsize [92.38; 95.12]}} \\
          $\text{pMNIST}_{24}$ & $\alpha$CMRU & 0.00 & \shortstack{82.78 \\ {\scriptsize [75.03; 86.41]}} \\
          $\text{pMNIST}_{24}$ & $\alpha$CMRU & 0.10 & \shortstack{86.70 \\ {\scriptsize [83.69; 89.84]}} \\
          $\text{pMNIST}_{24}$ & $\alpha$CMRU & 0.25 & \shortstack{89.03 \\ {\scriptsize [87.91; 89.81]}} \\
          $\text{pMNIST}_{24}$ & $\alpha$CMRU & 0.50 & \shortstack{94.42 \\ {\scriptsize [92.94; 95.59]}} \\
          $\text{pMNIST}_{24}$ & $\alpha$CMRU & 0.75 & \shortstack{95.80 \\ {\scriptsize [94.16; 96.91]}} \\
          $\text{pMNIST}_{24}$ & $\alpha$CMRU & 1.00 & \shortstack{95.77 \\ {\scriptsize [93.56; 96.69]}} \\
          $\text{pMNIST}_{24}$ & LRU & --- & \shortstack{95.74 \\ {\scriptsize [95.34; 96.25]}} \\
          $\text{pMNIST}_{24}$ & minGRU & --- & \shortstack{95.69 \\ {\scriptsize [95.19; 96.25]}} \\
          $\text{pMNIST}_{36}$ & CMRU & 0.00 & \shortstack{47.07 \\ {\scriptsize [38.62; 59.66]}} \\
          $\text{pMNIST}_{36}$ & CMRU & 0.10 & \shortstack{59.73 \\ {\scriptsize [56.19; 63.72]}} \\
          $\text{pMNIST}_{36}$ & CMRU & 0.25 & \shortstack{73.03 \\ {\scriptsize [68.97; 80.25]}} \\
          $\text{pMNIST}_{36}$ & CMRU & 0.50 & \shortstack{88.22 \\ {\scriptsize [82.62; 91.44]}} \\
          $\text{pMNIST}_{36}$ & CMRU & 0.75 & \shortstack{92.09 \\ {\scriptsize [89.66; 93.66]}} \\
          $\text{pMNIST}_{36}$ & CMRU & 1.00 & \shortstack{92.81 \\ {\scriptsize [90.62; 93.81]}} \\
          $\text{pMNIST}_{36}$ & $\alpha$CMRU & 0.00 & \shortstack{82.20 \\ {\scriptsize [79.66; 84.44]}} \\
          $\text{pMNIST}_{36}$ & $\alpha$CMRU & 0.10 & \shortstack{87.90 \\ {\scriptsize [85.81; 89.62]}} \\
          $\text{pMNIST}_{36}$ & $\alpha$CMRU & 0.25 & \shortstack{92.01 \\ {\scriptsize [88.25; 94.53]}} \\
          $\text{pMNIST}_{36}$ & $\alpha$CMRU & 0.50 & \shortstack{95.20 \\ {\scriptsize [94.47; 95.97]}} \\
          $\text{pMNIST}_{36}$ & $\alpha$CMRU & 0.75 & \shortstack{96.46 \\ {\scriptsize [96.22; 96.84]}} \\
          $\text{pMNIST}_{36}$ & $\alpha$CMRU & 1.00 & \shortstack{96.06 \\ {\scriptsize [95.56; 97.00]}} \\
          $\text{pMNIST}_{36}$ & LRU & --- & \shortstack{95.86 \\ {\scriptsize [95.03; 96.75]}} \\
          $\text{pMNIST}_{36}$ & minGRU & --- & \shortstack{95.64 \\ {\scriptsize [94.94; 96.56]}} \\
          \bottomrule
        \end{tabular}
      \end{sc}
    \end{small}
  \end{center}
  \vskip -0.1in
\end{table}

\begin{table}[htbp!]
  \caption{Accuracy (\%) for Copy First Discrete (15 items) tasks across sequence lengths. \cref{figure:copy_first_combined}, left panel.}
  \label{tab:copy_first_discrete}
  \begin{center}
    \begin{small}
      \begin{sc}
        \begin{tabular}{llc}
          \toprule
          Model Type & Seq Length & Accuracy (\%) \\
          \midrule
          CMRU & 100 & \shortstack{100.00 \\ {\scriptsize [100.00; 100.00]}} \\
          $\alpha$CMRU & 100 & \shortstack{100.00 \\ {\scriptsize [100.00; 100.00]}} \\
          LRU & 100 & \shortstack{100.00 \\ {\scriptsize [100.00; 100.00]}} \\
          minGRU & 100 & \shortstack{25.48 \\ {\scriptsize [6.28; 100.00]}} \\
          CMRU & 300 & \shortstack{100.00 \\ {\scriptsize [100.00; 100.00]}} \\
          $\alpha$CMRU & 300 & \shortstack{100.00 \\ {\scriptsize [100.00; 100.00]}} \\
          LRU & 300 & \shortstack{62.61 \\ {\scriptsize [6.41; 100.00]}} \\
          minGRU & 300 & \shortstack{6.67 \\ {\scriptsize [6.28; 7.34]}} \\
          CMRU & 500 & \shortstack{100.00 \\ {\scriptsize [100.00; 100.00]}} \\
          $\alpha$CMRU & 500 & \shortstack{100.00 \\ {\scriptsize [100.00; 100.00]}} \\
          LRU & 500 & \shortstack{44.04 \\ {\scriptsize [6.50; 100.00]}} \\
          minGRU & 500 & \shortstack{6.71 \\ {\scriptsize [6.50; 7.34]}} \\
          CMRU & 1000 & \shortstack{100.00 \\ {\scriptsize [100.00; 100.00]}} \\
          $\alpha$CMRU & 1000 & \shortstack{100.00 \\ {\scriptsize [100.00; 100.00]}} \\
          LRU & 1000 & \shortstack{25.31 \\ {\scriptsize [6.28; 100.00]}} \\
          minGRU & 1000 & \shortstack{6.49 \\ {\scriptsize [6.28; 6.66]}} \\
          CMRU & 2000 & \shortstack{100.00 \\ {\scriptsize [100.00; 100.00]}} \\
          $\alpha$CMRU & 2000 & \shortstack{100.00 \\ {\scriptsize [100.00; 100.00]}} \\
          LRU & 2000 & \shortstack{6.53 \\ {\scriptsize [6.12; 7.34]}} \\
          minGRU & 2000 & \shortstack{6.99 \\ {\scriptsize [6.50; 7.34]}} \\
          CMRU & 5000 & \shortstack{98.74 \\ {\scriptsize [93.72; 100.00]}} \\
          $\alpha$CMRU & 5000 & \shortstack{100.00 \\ {\scriptsize [100.00; 100.00]}} \\
          LRU & 5000 & \shortstack{6.81 \\ {\scriptsize [6.41; 7.34]}} \\
          minGRU & 5000 & \shortstack{6.50 \\ {\scriptsize [6.28; 6.66]}} \\
          CMRU & 10000 & \shortstack{100.00 \\ {\scriptsize [100.00; 100.00]}} \\
          $\alpha$CMRU & 10000 & \shortstack{100.00 \\ {\scriptsize [100.00; 100.00]}} \\
          LRU & 10000 & \shortstack{6.50 \\ {\scriptsize [6.28; 6.66]}} \\
          minGRU & 10000 & \shortstack{6.64 \\ {\scriptsize [6.28; 7.34]}} \\
          \bottomrule
        \end{tabular}
      \end{sc}
    \end{small}
  \end{center}
  \vskip -0.1in
\end{table}

\begin{table}[htbp!]
  \caption{MAE for Copy First Continuous (no noise) tasks with $d=4$ across sequence lengths. \cref{figure:copy_first_combined}, right panel.}
  \label{tab:copy_first_continuous}
  \begin{center}
    \begin{small}
      \begin{sc}
        \begin{tabular}{llc}
          \toprule
          Model Type & Seq Length & MAE \\
          \midrule
          CMRU & 100 & \shortstack{0.0424 \\ {\scriptsize [0.0392; 0.0465]}} \\
          $\alpha$CMRU & 100 & \shortstack{0.0002 \\ {\scriptsize [0.0002; 0.0002]}} \\
          LRU & 100 & \shortstack{0.0002 \\ {\scriptsize [0.0002; 0.0003]}} \\
          minGRU & 100 & \shortstack{0.3023 \\ {\scriptsize [0.0002; 0.5038]}} \\
          CMRU & 300 & \shortstack{0.0412 \\ {\scriptsize [0.0380; 0.0438]}} \\
          $\alpha$CMRU & 300 & \shortstack{0.0002 \\ {\scriptsize [0.0001; 0.0002]}} \\
          LRU & 300 & \shortstack{0.2017 \\ {\scriptsize [0.0001; 0.5041]}} \\
          minGRU & 300 & \shortstack{0.4031 \\ {\scriptsize [0.0002; 0.5038]}} \\
          CMRU & 500 & \shortstack{0.0443 \\ {\scriptsize [0.0322; 0.0539]}} \\
          $\alpha$CMRU & 500 & \shortstack{0.0002 \\ {\scriptsize [0.0002; 0.0002]}} \\
          LRU & 500 & \shortstack{0.2016 \\ {\scriptsize [0.0002; 0.5038]}} \\
          minGRU & 500 & \shortstack{0.5038 \\ {\scriptsize [0.5038; 0.5039]}} \\
          CMRU & 1000 & \shortstack{0.0465 \\ {\scriptsize [0.0397; 0.0525]}} \\
          $\alpha$CMRU & 1000 & \shortstack{0.0002 \\ {\scriptsize [0.0002; 0.0003]}} \\
          LRU & 1000 & \shortstack{0.4031 \\ {\scriptsize [0.0002; 0.5039]}} \\
          minGRU & 1000 & \shortstack{0.5038 \\ {\scriptsize [0.5038; 0.5039]}} \\
          CMRU & 2000 & \shortstack{0.0432 \\ {\scriptsize [0.0361; 0.0499]}} \\
          $\alpha$CMRU & 2000 & \shortstack{0.0002 \\ {\scriptsize [0.0002; 0.0002]}} \\
          LRU & 2000 & \shortstack{0.4032 \\ {\scriptsize [0.0006; 0.5038]}} \\
          minGRU & 2000 & \shortstack{0.5038 \\ {\scriptsize [0.5038; 0.5038]}} \\
          CMRU & 5000 & \shortstack{0.0470 \\ {\scriptsize [0.0355; 0.0618]}} \\
          $\alpha$CMRU & 5000 & \shortstack{0.0002 \\ {\scriptsize [0.0002; 0.0002]}} \\
          LRU & 5000 & \shortstack{0.5038 \\ {\scriptsize [0.5038; 0.5039]}} \\
          minGRU & 5000 & \shortstack{0.5038 \\ {\scriptsize [0.5038; 0.5038]}} \\
          CMRU & 10000 & \shortstack{0.0589 \\ {\scriptsize [0.0538; 0.0698]}} \\
          $\alpha$CMRU & 10000 & \shortstack{0.0002 \\ {\scriptsize [0.0002; 0.0002]}} \\
          LRU & 10000 & \shortstack{0.5038 \\ {\scriptsize [0.5038; 0.5039]}} \\
          minGRU & 10000 & \shortstack{0.5038 \\ {\scriptsize [0.5038; 0.5038]}} \\
          \bottomrule
        \end{tabular}
      \end{sc}
    \end{small}
  \end{center}
  \vskip -0.1in
\end{table}


\end{document}